
\documentclass{article}

\usepackage{microtype}
\usepackage{graphicx}
\usepackage{subfig}
\usepackage{booktabs} 
\usepackage{braket}
\usepackage{comment}
\usepackage{natbib}

\usepackage[dvipsnames]{xcolor}
\definecolor{prin}{RGB}{213, 232, 212}
\definecolor{cond}{RGB}{255, 242, 204}
\definecolor{dyna}{RGB}{218, 232, 252}
\definecolor{appl}{RGB}{248, 206, 204}

\usepackage{hyperref}



\usepackage[accepted]{icml2025}

\usepackage{amsmath}
\usepackage{amssymb}
\usepackage{mathtools}
\usepackage{amsthm}
\usepackage{mathrsfs,amsfonts}

\usepackage[capitalize,noabbrev]{cleveref}

\theoremstyle{plain}
\newtheorem{theorem}{Theorem}

\theoremstyle{definition}
\newtheorem{definition}[theorem]{Definition}

\theoremstyle{remark}
\newtheorem{remark}[theorem]{Remark}

\usepackage[textsize=tiny]{todonotes}

\icmltitlerunning{Position: Solve Layerwise Linear Models First to Understand Neural Dynamical Phenomena}

\begin{document}

\twocolumn[



\icmltitle{Position: Solve Layerwise Linear Models First to Understand Neural Dynamical Phenomena (Neural Collapse, Emergence, Lazy/Rich Regime, and Grokking)}



\icmlsetsymbol{equal}{*}

\begin{icmlauthorlist}
\icmlauthor{Yoonsoo Nam}{equal,OXP}
\icmlauthor{Seok Hyeong Lee}{equal,SNU}
\icmlauthor{Cl{\'e}mentine C J Domin{\'e}}{UCL}
\icmlauthor{Yeachan Park}{SEJ}
\icmlauthor{Charles London}{OXC}
\icmlauthor{Wonyl Choi}{BU}
\icmlauthor{Niclas G{\"o}ring}{OXP}
\icmlauthor{Seungjai Lee}{ICN}
\icmlcorrespondingauthor{Seungjai Lee}{seungjai.lee@inu.ac.kr}
\end{icmlauthorlist}

\icmlaffiliation{OXP}{Department of Theoretical Physics, University of Oxford, Oxfordshire, United Kingdom}
\icmlaffiliation{OXC}{Department of Computer Science, University of Oxford, Oxfordshire, United Kingdom}
\icmlaffiliation{UCL}{Gatsby Computational Neuroscience Unit, University College London,
London, United Kingdom}
\icmlaffiliation{SNU}{Center for Quantum Structures in Modules and Spaces, Seoul National University, Seoul, South Korea}
\icmlaffiliation{SEJ}{Department of Mathematics and Statistics, Sejong University, Seoul, South Korea}
\icmlaffiliation{ICN}{Department of Mathematics, Incheon National University, Incheon, South Korea}
\icmlaffiliation{BU}{Department of Computer Science, Boston University, Massachusetts, United States of America}


\icmlkeywords{Machine Learning, ICML}

\vskip 0.3in
]



\printAffiliationsAndNotice{\icmlEqualContribution} 

\begin{abstract}
In physics, complex systems are often simplified into minimal, solvable models that retain only the core principles.
In machine learning, layerwise linear models (e.g., linear neural networks) act as simplified representations of neural network dynamics.
These models follow the dynamical feedback principle, which describes how layers mutually govern and amplify each other's evolution.
This principle extends beyond the simplified models, successfully explaining a wide range of dynamical phenomena in deep neural networks, including neural collapse, emergence, lazy and rich regimes, and grokking.
In this position paper, we call for the use of layerwise linear models retaining the core principles of neural dynamical phenomena to accelerate the science of deep learning.
\end{abstract}

\section{Introduction}
Physicists often build intuition about complex natural phenomena by abstracting away intricate details—for instance, modeling a cow as a sphere \cite{kaiser2014sacred}, linearizing pendulum motion \cite{huygens19661673}, or using simplified frameworks like Hopfield networks to explore associative memory and network dynamics \cite{hopfield2007hopfield}. These abstractions enable tractable analysis while preserving the core principles of the system. A good model not only allows analytical solutions but also captures the underlying principle, often providing insights and extending their utility beyond their formal limits.   

Deep neural networks (DNNs) are complex dynamical systems, with non-linear activations (e.g., ReLU) posing major challenges in analysis. Without non-linear activations, DNNs become layerwise linear models (e.g., linear neural networks), often underappreciated as simple product of matrices for their lack of expressivity.
However, \textbf{the dynamics of layerwise linear models are non-linear}~\cite{saxe2013exact}, allowing them to explain key DNN phenomena such as emergence \cite{nam2024exactly} observed in large language models \cite{brown2020language}, neural collapse \cite{mixon2020neural} observed in image classification tasks \cite{papyan2020prevalence}, lazy/rich regimes \cite{domine2024lazy} observed in extreme limits \cite{jacot2018neural, chizat2019lazy}, and grokking \cite{kunin2024get} observed in algorithmic tasks \cite{power2022grokking}, and more as \textbf{dynamical phenomena of layerwise structure}. 

Contrary to most non-linear models, the dynamics of layerwise linear models, under suitable assumptions, are exactly solvable (e.g., Baldi \& Hornik (\citeyear{baldi1989neural}); Saxe et al. (\citeyear{saxe2013exact}); and Braun et al. (\citeyear{braun2022exact})). These solutions allow mathematical interpretations that extend beyond the limits of the models. In this paper, we propose the dynamical feedback principle (\cref{sec:amplifying_dynamics}) as a unified principle for understanding layerwise linear models, and review how they accurately represent DNN dynamics to argue our position:

\textbf{Position: We call for new studies to prioritize the dynamics of layerwise structure as the key feature over the non-linear activations, until its limits are fully explored.}


\begin{figure*}
  \includegraphics[width=\textwidth]{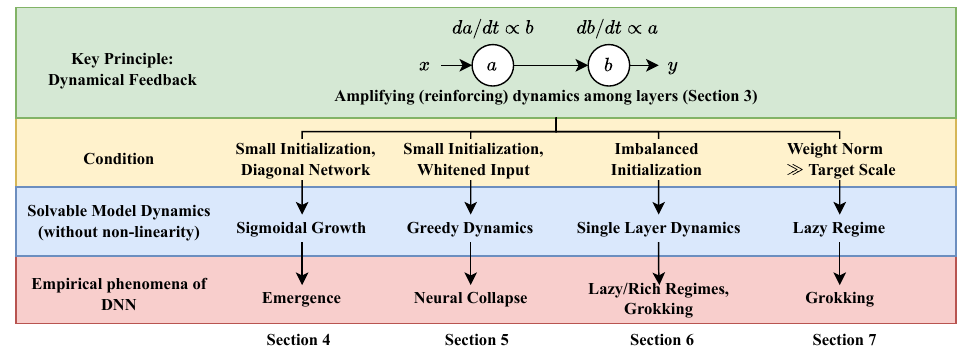}
  \caption{\textbf{Paper outline} 
  For a systematic presentation of various works on layerwise linear models, we begin the section by building intuition on how the key principle (green) behaves under a condition (yellow). We then formalize this intuition as a key property (blue) using a solvable layerwise linear model. Finally, we discuss how this property from the layerwise linear model extends to describe an empirical phenomenon in DNNs (red).
  }\label{fig:money}
\end{figure*}

Layerwise linear models help identify whether the dynamics of layerwise structure drives a phenomenon, demystifying perceptions of the complex roles of datasets, logic, and non-linear activations. Their solvability enables a theoretical approach, while their simplicity makes them accessible to a broader audience, accelerating the science of deep learning.




\subsection{Contributions}
\begin{itemize}
    \item We propose dynamical feedback principle (\cref{sec:amplifying_dynamics}) as a novel unifying principle for understanding seemingly uncorrelated DNN phenomena (neural collapse, emergence, lazy/rich regimes, and grokking) as consequences of dynamics under layerwise structure.
    \vspace{-0.2cm}    
    \item We review solvable layerwise linear models (\cref{sec:emergence,sec:greedy,sec:imba,sec:weight/target}) and discuss their significance through the principle and intuitive derivations.
    \item We solve a toy model that offers intuitive insight on various lazy regimes and methods to induce rich regimes (\cref{sec:weight/target}).
\end{itemize}

\vspace{-0.3cm}  
\subsection{Paper outline}
\cref{fig:money} illustrates the paper outline. In \cref{sec:amplifying_dynamics}, we introduce the dynamical feedback principle, which unifies the explanation of seemingly unrelated DNN phenomena.

In \cref{sec:emergence,sec:greedy,sec:imba,sec:weight/target}, we first demonstrate how the principle applies under specific conditions, then formalize the intuition using a solvable layerwise linear model, \textbf{prioritizing intuitive derivations over rigor} found in the references. Finally, we explore how the principle extends beyond layerwise linear models to explain phenomena in practical DNNs.

In the discussion, we argue that solvable layerwise linear models effectively capture the dynamics of DNN through the feedback principle, making them essential for advancing DNN theory, hence our position. We conclude with promising research directions. For a \textbf{glossary} of notations and terms (emphasized in \emph{italic}), see \cref{app:glossary}.

\section{Setup}\label{sec:setup}
Throughout the paper, unless explicitly stated, we assume the input features $x \in \mathbb{R}^d$ to be $d$ dimensional independent zero-mean Gaussian random variables. We only consider gradient flow (GF) and assume that the training dataset follows a certain probability distribution for analytic simplicity.


\textbf{Target function} We assume
a realizable target function $f^*$ with \emph{target scales} $S_i \in \mathbb{R}$  for $i=1,\dots,d$:
\begin{align}\label{eq:target}
f^*(x) =  \sum_{i=1}^d x_i S_i.
\end{align}
\textbf{Loss} We consider the mean square error (MSE) loss
\begin{equation}\label{eq:loss}
\mathcal{L} := \frac{1}{2}\mathbf{E} \left[\left(f^*(x)-f(x)\right)^2\right].
\end{equation}
\textbf{Models}
We define \emph{layerwise linear models} as 2-layer models whose outputs are multilinear with respect to any layers of parameters. Examples include the variants of linear neural networks, illustrated in \cref{fig:models}:
\begin{align}
f^{(diag)}_{a,b}(x) &= \sum_{i=1}^d x_i a_ib_i, \quad a,b \in \mathbb{R}^d, \label{eq:lin_mullin}\\
f^{(lnn)}_{W_1,W_2}(x) &= x^TW_1W_2, \quad W_1 \in \mathbb{R}^{d \times p}, W_2 \in \mathbb{R}^{p \times c} \label{eq:linear_nn}.
\end{align}

A linear model with no hidden layer is used as a base model for comparison:
\begin{align}\label{eq:lin}
f^{(lin)}_{\theta}(x) &= x^T \theta , \quad \theta \in \mathbb{R}^d. 
\end{align}

Like linear models, layerwise linear models do not require linearity with the input features $x_i$, though we use them for demonstrative purposes. See \cref{app:functionspace} for details.


\begin{figure}[ht]
  \includegraphics[width=\columnwidth]{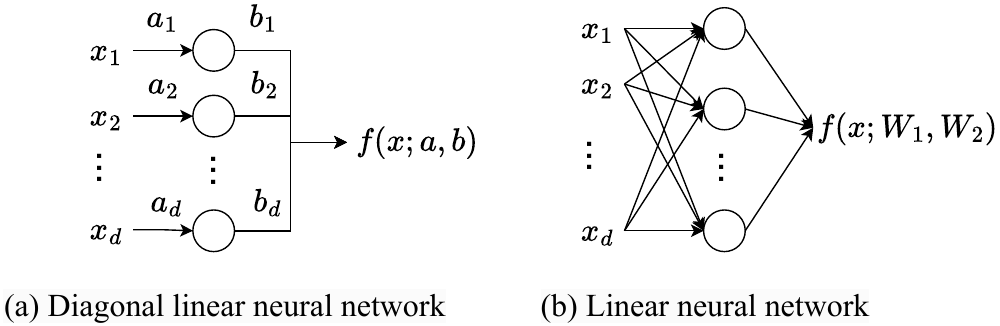}
  \vspace{-0.4cm}
  \caption{\textbf{Layerwise linear models.} Layerwise linear models (\cref{app:functionspace}) include, but are not limited to, \textbf{(a):} diagonal linear neural networks (\cref{eq:lin_mullin}) and \textbf{(b):} linear neural networks (\cref{eq:linear_nn}). The layerwise structure leads to distinct dynamics compared to linear models (\cref{eq:lin}).}\label{fig:models}
\end{figure}

\section{The Dynamical Feedback Principle}\label{sec:amplifying_dynamics}
Dynamical feedback principle describes how the magnitude of one layer mutually scales the rate of change in other layers (\cref{fig:money}). The dynamical feedback principle arises from the gradient descent dynamics under layerwise structure. For example, the GF equation of diagonal linear neural network is
\begin{equation}\label{eq:amplifying_dynamics}
\frac{da_i}{dt} = -b_i \mathbf{E}[x_i^2] (a_ib_i - S_i), \quad \frac{db_i}{dt} = -a_i \mathbf{E}[x_i^2] (a_ib_i - S_i).
\end{equation}
The product of parameters (i.e., layerwise structure) creates an \emph{amplifying dynamics}: the size of $a_i$ governs the rate of change for $b_i$, and vice versa, yielding a \textbf{dynamical feedback}. To highlight its significance, we compare the gradient equation to that of the linear model:
\begin{equation}\label{eq:lin_dynamics}
\frac{d\theta_i}{dt} = - \mathbf{E}[x_i^2] (\theta_i - S_i). 
\end{equation}

In linear models, parameters evolve  depending only on their distances to respective target scales ($\theta_i - S_i$), lacking the feedback with other parameters.
The following sections focus on exploring the consequences of feedback.

\paragraph{Conservation of magnitude difference} Because of the commutativity of products in \cref{eq:lin_mullin}, we identify a conserved quantity in \cref{eq:amplifying_dynamics} as
\begin{align}\label{eq:conserved}
a_i\frac{d}{dt}a_i -  b_i\frac{d}{dt}b_i = 0 \quad \Rightarrow \quad 
a_i^2 - b_i^2 = \mathcal{C}_i, 
\end{align}
where $\mathcal{C}_i$ is determined at initialization. We can extend the argument for linear neural networks to show that $W_2W_2^T - W_1^TW_1$ is conserved (\cref{app:simple_math}).


\section{From Amplifying Dynamics to Emergence}\label{sec:emergence}
In \cref{eq:amplifying_dynamics}, when $a_i \approx b_i$, the dynamical feedback exhibits rich-get-richer (poor-get-poorer) property: a larger (smaller) $|a_i|$ results in a faster (slower) evolution of $b_i$, which amplifies (attenuates) in positive (negative) feedback. In this section, we formalize this property as \emph{sigmoidal growth} (delayed saturation) of $a_ib_i$ in diagonal linear neural networks, which explains the \emph{emergence} – or sudden change in task or subtask performance
– observed in large language models \cite{ganguli2022predictability, srivastava2022beyond, wei2022emergent,chen2023skill}.


\subsection{Sigmoidal dynamics and stage-like training}
We consider dynamics of diagonal linear neural networks (\cref{eq:amplifying_dynamics}) with small and equal initialization; $a_i(0)=b_i(0)$ and $ 0< a_i(0) \ll 1$. The dynamics decouples into $d$ independent \emph{modes}, where each mode follows a sigmoidal saturation:
\begin{equation}\label{eq:dynamics}
a_i(t)b_i(t)/S_i = \frac{1}{1+\left(\frac{S_i}{a_i(0)b_i(0)} - 1\right)e^{-2 S_i\mathbf{E}[x_i^2] t}}.
\end{equation}
This contrasts to the exponential saturation of modes in the linear model with $
\theta_i(0) =0$ (\cref{eq:lin_dynamics}):
\begin{equation}\label{eq:dynamics_linear}
\theta_i(t)/S_i = 1-e^{-\mathbf{E}[x_i^2] t}.
\end{equation}
See \cref{app:simple_math} for the derivation of both models. For a general derivation when $a_i \neq b_i$, see  Appendix A of Saxe et al. (\citeyear{saxe2013exact}).

\cref{fig:dynamics_linear_comparison} demonstrates the difference between the dynamics of two models. While a mode in linear models saturates immediately after initialization (\cref{fig:dynamics_linear_comparison}(a)), that of diagonal linear neural network saturates \textbf{after a delay}. The small value of $a_ib_i$ delays the saturation as small $a_i$ downscales the gradient of $b_i$ and vice versa (feedback principle). After sufficient evolution, the feedback principle abruptly saturates $a_ib_i$ (\cref{fig:dynamics_linear_comparison}(b)). 


\begin{figure}[ht]
    \centering
    \includegraphics[width=\columnwidth]{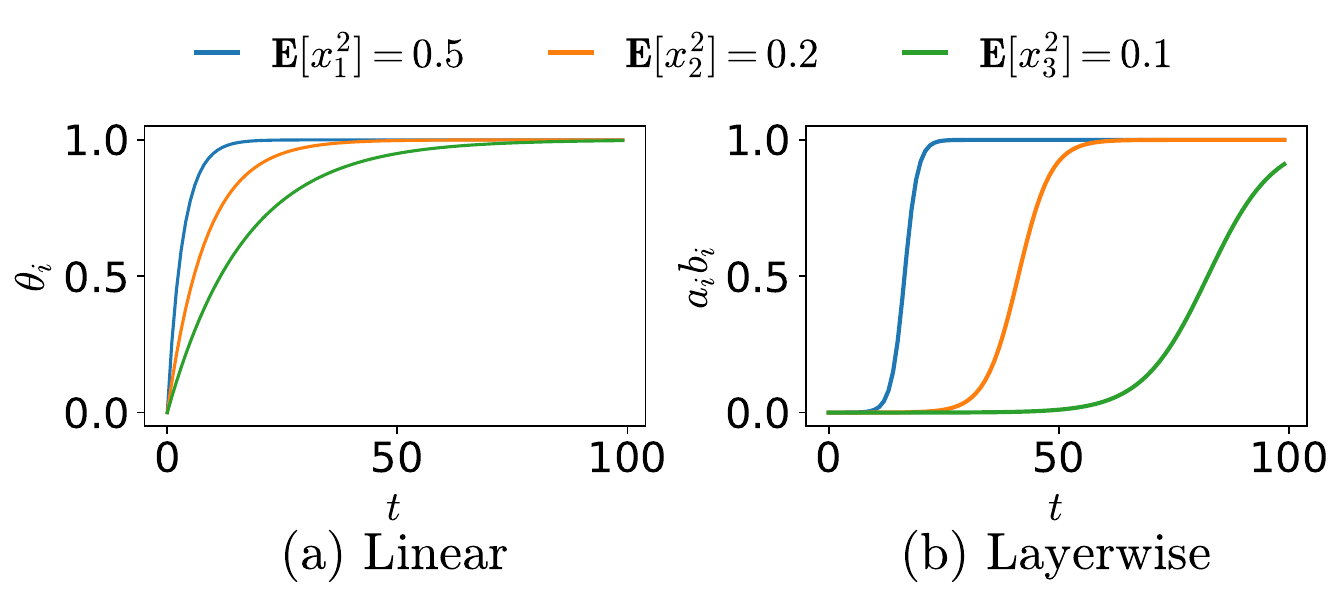}%
    \vspace{-0.4cm}
    \caption{\textbf{Dynamics of the linear model and the diagonal linear neural network.} The colored lines show the saturation curves of modes with different variances for \textbf{(a):} the linear model (\cref{eq:dynamics_linear}) and \textbf{(b):} the diagonal linear neural network (\cref{eq:dynamics}) with $S_i=1$. For the linear model, all $\theta_i$'s saturate from $t=0$ only differing in the saturation speed. For the layerwise model, $a_ib_i$'s show delayed saturations depending on $\mathbf{E}[x^2_i]$, learning the modes in sequences.}\label{fig:dynamics_linear_comparison}
\end{figure}


\paragraph{Stage-like training}
In \cref{fig:dynamics_linear_comparison}(b), the first mode (blue) saturates fully while the next mode remains negligible, contrasting with the concurrent saturation of modes in linear models (\cref{fig:dynamics_linear_comparison}(a)).  The sequential learning of modes or the stage-like training occurs in layerwise models when saturation time scales ($1/(S_i\mathbf{E}[x_i^2])$) differ sufficiently; the sigmoidal dynamics (\cref{eq:dynamics}) delays slower modes long enough for faster modes to fully saturate.

\subsection{Application 1: Saddle-to-saddle learning}

Saddle-to-saddle learning refers to sudden drops and plateaus of the loss during training \cite{jacot2022saddletosaddledynamicsdeeplinear, pesme2023saddletosaddle, atanasov2024optimization}.
We can describe the DNN phenomenon with the stage-like training of diagonal linear neural network with varying $S_i\mathbf{E}[x_i^2]$. A drop maps to the abrupt saturation of a feature, while a plateau to the delay before the saturation of the next feature.


\subsection{Application 2: Emergence}

Emergence is an empirical observation that larger languages models suddenly gain performance when using more data or parameters \cite{wei2022emergent}.\footnote{Emergence differs from grokking in that it does not account for the \textbf{gap} between achieving prefect training and test accuracies.}
Emergence has been considered challenging to explain through \emph{``analysis of gradient-based training”}~\cite{arora2023theory}, and studies have been focused on defining basic \emph{skills} and how they cumulate to a sudden improvement in more complex abilities~\cite{arora2023theory, chen2023skill,yu2023skill,okawa2024compositional}.

\paragraph{Diagonal linear neural network with prebuilt skills}
In Nam et al.~(\citeyear{nam2024exactly}), the authors modeled emergence as a \textbf{dynamical outcome} of sigmoidal growth. They studied the multitask parity problem, consisting of parity tasks (skills) whose frequencies follow a power-law distribution. They used diagonal linear neural network with the input features ($x_i$ in 
\cref{eq:lin_mullin}) replaced by the skill functions $g_k(x)$:
\begin{align}\label{eq:skill_function}
f(x) = \sum_{k=1}^p a_kb_k g_k(x), \quad\quad f^*(x) = S\sum_{k=1}^{p^*} g_k(x).
\end{align}
The skill functions $g_k(x)$ are prebuilt features that map the inputs to a scalar by calculating the spare parity function. The constants $p$ and $p^*$ are the number of skill functions in the model and target function, respectively.
In their setup, the model learns the skills through sigmoidal dynamics (\cref{eq:dynamics}), prioritizing the more frequent skills (with larger $\mathbf{E}[g_k^2(x)]$) first --- thus the time emergence.

\begin{figure}[htp] 
    \centering 
    \includegraphics[width=0.99 \columnwidth]{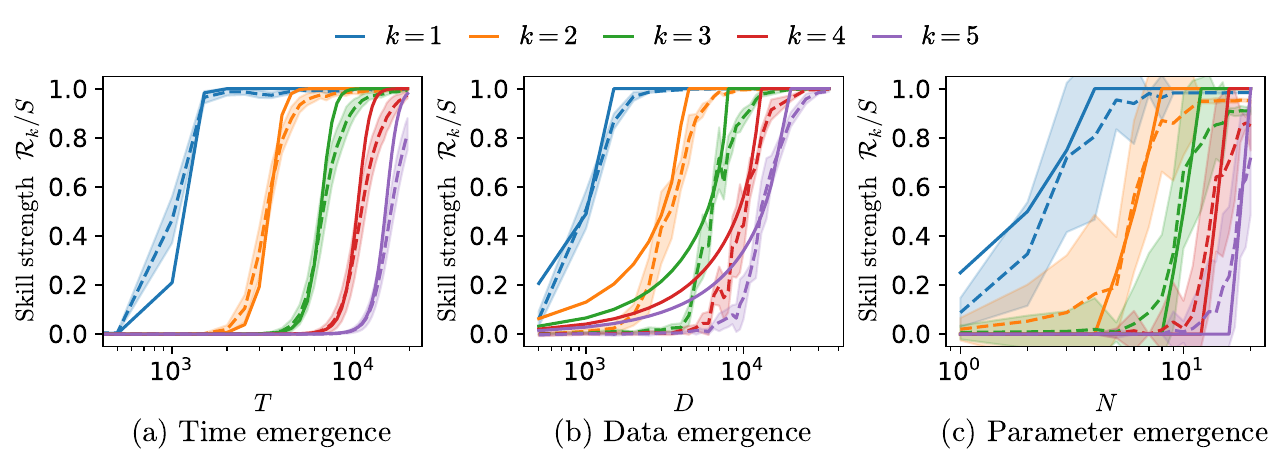}
\vspace{-0.4cm}
\caption{\textbf{Predicting emergence with layerwise linear model (Figure 1 from \cite{nam2024exactly}).}  The skill strength $\mathcal{R}_k =\mathbf{E}[f(x)g_k(x)]/\mathbf{E}[(g_k(x))^2]$ measures the linear correlation between the $k^{\mathrm{th}}$ skill function and the learned function or how well the skill is learned. Each color represents a different skill. The solid lines show the theoretical layerwise model (\cref{eq:skill_function}) calibrated on the first skill, while the dashed lines represent the empirical results of a 2-layer neural network trained on the multitask sparse parity problem.}\label{fig:emergence} 
\end{figure}

The decoupled skills in the model induce decoupled dynamics: the $k^{\mathrm{th}}$ skill is learned only if $g_k(x)$ exists in the model ($k \leq p$) and a corresponding sample (from the $k^{\mathrm{th}}$ skill) appears in the training set. This leads to abrupt shifts (emergence) in learning — for instance, the $k^{\mathrm{th}}$ function is suddenly acquired as $p$ increases from $k-1$ to $k$. Similar discontinuities arise in the data, depending on the likelihood of encountering a sample from the $k^{\mathrm{th}}$ skill. 
In \cref{fig:emergence}, the model (\cref{eq:skill_function}) accurately captures the emergent dynamics in a 2-layer neural network (with ReLU activation) for time, data, and the number of parameters.
See \cref{app:emergence} or Nam et al. (\citeyear{nam2024exactly}) for details.

\subsection{Application 3: Scaling laws}
Scaling laws refer to how the performance of large language models scales as a power law with additional resources (e.g., compute, data, parameters) \cite{hestness2017deep, kaplan2020scaling}. 
Michaud et al.~(\citeyear{michaud2023quantization}) proposed that scaling laws arise from the emergent learning of numerous subtasks or skills whose frequencies follow a power law. Consequently, Nam et al. (\citeyear{nam2024exactly}) predicted the scaling laws observed in 2-layer neural networks trained on multitask sparse parity problem with \cref{eq:skill_function}.


\paragraph{Intuition} The authors of Nam et al.(\citeyear{nam2024exactly}) derived emergence and scaling laws (for time, data, and parameters), relying heavily on the a priori given skill functions $g_k$ (\cref{eq:skill_function}) and their decoupled dynamics. But why does a neural network --- without any information on the skill functions or decoupled dynamics --- show similar behavior? This behavior can be attributed to the layerwise structure and the power-law distribution of skill frequencies, which together induce \textbf{stage-like training} in neural networks, where skills are feature-learned on different time scales. This effectively decouples feature learning, allowing the network to distinguish features and justifying the use of prebuilt skill functions in \cref{eq:skill_function}. See \cref{app:emergence} or Nam et al. \citeyearpar{nam2024exactly} for a detailed discussion.



\section{From Greedy (Low-Rank) Dynamics Toward Salient Features to Neural Collapse}\label{sec:greedy}

In Section \ref{sec:emergence}, we demonstrated that sigmoidal saturation (\cref{eq:dynamics}) in diagonal linear neural networks prioritizes learning features that are more correlated (with larger $S_i\mathbf{E}[x_i^2] = \mathbf{E}[x_iy_i]$). In this section, by building on Saxe et al. (\citeyear{saxe2013exact}), we show how the preference toward correlated features leads to a \emph{low-rank bias} \textbf{(greedy dynamics)} in linear neural networks. We then draw on Mixon et al. (\citeyear{mixon2020neural}) to explain \emph{neural collapse} — the collapse of last-layer features into a low-rank structure —observed in vision classification tasks \cite{papyan2020prevalence}.

\subsection{Greedy dynamics} 
The dynamics of linear neural networks is challenging to solve in general. Assuming small initialization and whitened input ($\mathbf{E}[xx^T]=I$), Saxe et al.~(\citeyear{saxe2013exact}) and following works \cite{ji2018gradient,arora2018convergence,lampinen2018analytic,gidel2019implicit,tarmoun2021understanding} solved the exact dynamics of linear neural networks. The dynamics \textbf{decouples} into $c$ independent modes similar to that of diagonal linear neural network (\cref{app:simple_math}): 

\begin{equation}\label{eq:dynamics_saxe}
\frac{\alpha_i^2(t)}{\rho_i} = \frac{1}{1+\left(\frac{\rho_i}{\alpha_i^2(0)} - 1\right)e^{-2 \rho_i t}},
\end{equation}
where $\alpha_i := u_i^TW_1W_2v_i$, and $u_i, \rho_i, v_i$ are the $i^{\mathrm{th}}$ left singular vector, singular value, and the right singular vector of the correlation matrix  $\mathbf{E}\left[xf^*(x)^T\right] = UPV$, respectively. 

Note the similarity between \cref{eq:dynamics} and \cref{eq:dynamics_saxe}, where $a_ib_i$ and $S_i$ in \cref{eq:dynamics} are replaced by $\alpha_i^2$ and $\rho_i$ in \cref{eq:dynamics_saxe}, respectively. Linear neural networks automatically \textbf{identify} the linear combination of input features \textbf{most correlated} (largest $\rho_i$) with the target (salient features). As discussed in stage-like training, modes are learned sequentially based on the order of $\rho_i$ or their saliency. This bias toward salient features introduces a preference for minimum rank, as only target-relevant $c$ (the dimension of the output) modes are trained, even though $W_1$ can reach a rank of $\min(d,p)$. We will refer to the tendency toward learning the most target-relevant features first as the \textbf{greedy dynamics}.

\subsection{Application: Neural collapse}
Neural collapse (NC) \cite{papyan2020prevalence} describes the phenomenon where last-layer features form a simplex equiangular tight frame (ETF) -- orthogonal vectors projected at the complement of the global mean. For instance, last layer feature vectors of ResNet18 trained on CIFAR10 converge to 10 orthogonal vectors ($9$-simplex ETF), corresponding to the number of classes \cite{papyan2020prevalence}. See \cref{app:NC} or \cite{papyan2020prevalence} for the formal definition.


\begin{figure}[ht]
    \centering
    \includegraphics[width=0.6\columnwidth]{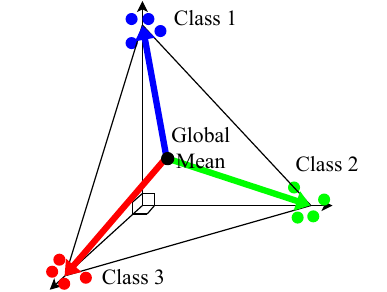}%
    \vspace{-0.3cm}
    \caption{\textbf{Illustration of neural collapse.} 
    In NC, the last layer feature vectors (the post-activation of the penultimate layer), illustrated as colored dots, cluster by their class mean vector, illustrated as colored arrow, and form a simplex ETF structure (orthogonal vectors projected at the compliment of the global mean vector).}\label{fig:NC}
\end{figure}

The NC gained attention as the last layer feature vectors occupied only $c$ dimensions out of $p$ dimensional feature space, where $c$ is the number of classes and $p$ is the width of the last layer.
Mixon et al. \citeyear{mixon2020neural} and similar works \cite{fang2021layer,han2021neural} studied the dynamics of NC through the unconstrained feature model (UFM). The UFM trains a product of two matrices where one matrix represents the features and the other matrix represents the last layer weights, a special case of linear neural networks. See also relevant works on matrix factorization \cite{arora2019implicit,li2020towards}.



\paragraph{Informal derivation of NC}
We show a non-rigorous derivation of NC and discuss why feature vectors of linear neural network $XW_1$, where $X \in \mathbb{R}^{n \times d}$ is the input feature matrix for $n$ training datapoins, collapse into a rank $c$ matrix. For rigorous theorems, see Mixon et al. (\citeyear{mixon2020neural}) or Fang et al. (\citeyear{fang2021layer}). First, we assume sufficiently small initialization such that the conserved quantity (\cref{eq:conserved}) is $W_1(0)^TW_1(0) - W_2(0)W_2(0)^T \approx 0$. After training,
\begin{equation}\label{eq:fit_condition}
XW_1(\infty)W_2(\infty) = Y,
\end{equation}
where $Y \in \mathbb{R}^{n \times c}$ is the label matrix with one-hot vector as the labels, and $W_i(\infty)$ the trained matrices. Since $Y$ is a rank $c$ matrix, $W_2(\infty)$ is also rank $c$. The small initialization condition $W_1^TW_1 - W_2W_2^T \approx 0$ ensures that $W_1$ and $W_2$ have the same rank, resulting in the feature matrix $XW_1$ having rank $c$ instead of $\min(n,p)$. The matrix $XW_1$ collapses into $c$ orthogonal directions, with each direction trivially mapping to a class to satisfy \cref{eq:fit_condition}.


\paragraph{Intuition}
Loosely speaking, $W_1^TW_1 - W_2W_2^T \approx 0$ is analogous to the small initialization in Saxe et al.~(\citeyear{saxe2013exact}).\footnote{Large initialization can achieve $W_1^T W_1 - W_2 W_2^T \approx 0$ and similar dynamics  \cite{braun2022exact,domine2024lazy}.} From the dynamics perspective, the greedy dynamics of layerwise models (\cref{eq:dynamics_saxe}) sends gradients toward only the relevant $c$ directions (modes), effectively limiting the rank of the feature matrix ($XW_1$).

Practical DNNs are often initialized with small weights and are expressive enough to fit the training set without using all layers. Because of the layerwise structure, we can expect the greedy dynamics toward minimal rank to approximately hold even with non-linear activations, resulting in NC.

\paragraph{Greedy dynamics and depth} In deeper networks, the additional parameter products intensify amplifying dynamics, sharpening sigmoidal saturation into a Heaviside-like step functions, and widening the gap between more and less correlated features. Under mild assumptions, such models admit analytic solutions \cite{saxe2013exact, lyu2023implicit, sukenik2023deep, kunin2024get}, revealing biases toward $L_1$-norm solutions and stronger reliance on the network's initial representation. The analysis also offers an explanation for the Lottery Ticket Hypothesis \cite{frankle2019lottery} where most parameters can be pruned with minimal performance loss.

\section{From Layer Imbalance to Lazy/Rich Regimes}\label{sec:imba}

So far, we assumed that layers have similar magnitudes ($a_i \approx b_i$), where the feedback principle creates positive or negative feedback on their change. However, for heavily imbalanced layers (e.g., $a_i \gg 1 \gg b_i$), the heavier layer receives negligible gradients, remaining nearly fixed and disrupting nonlinear reinforcement between layers. In this section, we formalize the role of layer imbalance in linear neural networks and review Kunin et al. (\citeyear{kunin2024get}) to show how layer imbalance can control \emph{lazy/rich regimes} \cite{chizat2019lazy,domine2024lazy} --- a linear/non-linear dynamics --- and \emph{grokking} \cite{power2022grokking} --- a delayed generalization --- in neural networks.



\subsection{$\lambda$-balanced assumption}
Deriving explicit dynamics for linear neural networks with general initialization is challenging. However, we can formalize the intuition of layer imbalance as a solvable model under the following condition \cite{fukumizu2000statistical,arora2018stronger,braun2022exact, domine2024lazy,kunin2024get,varre2024spectral}:
\begin{align}\label{eq:layer_imbalance}
W_2W_2^T - W_1^TW_1 = \lambda I,
\end{align}
where $I$ is an identity matrix. Recent analytical models \citep{domine2024lazy,kunin2024get,tu2024mixed,xu2024does} solved the exact dynamics for a square ($d=c$) linear neural network satisfying \cref{eq:layer_imbalance} and confirmed the intuition on layer imbalance. They showed that smaller $|\lambda|$ (balanced layers) leads to nonlinear dynamics similar to greedy dynamics even with large weight initialization, while larger $|\lambda|$ (imbalanced layers) results in linear dynamics where only the lighter layer is trained (\cref{fig:dynamics_lazy_rich}(a)). For details and their relationship to the architecture, see \cref{app:richlazy}.


\subsection{Application 1: Target learning a layer with layer imbalance }

\emph{Lazy} regimes \cite{chizat2019lazy,woodworth2020kernel,azulay2021implicit} refer to when neural networks shows a linear dynamics (\cref{eq:lin_dynamics}) while \emph{rich} regime refers to a sufficient deviation from it (e.g., non-linear dynamics of \cref{eq:amplifying_dynamics}). 

In 2-layer linear neural networks, learning only $W_1$ or $W_2$ through layer imbalance results in linear dynamics (lazy).
In 2-layer neural networks, the non-linear activations (e.g., ReLU) \textbf{breaks the symmetry} between layers: updating only the last layer results in linear dynamics, whereas updating only the first layer can facilitate feature learning.
Kunin et al. (\citeyear{kunin2024get}) demonstrated that the upstream initialization — heavier later layers — promotes the learning in the earlier layers of convolutional neural networks and improves their interpretability and feature learning. 

\begin{figure}[!ht]
    \centering
    \subfloat[Layer Imbalance]{
    \includegraphics[width=0.5\columnwidth]{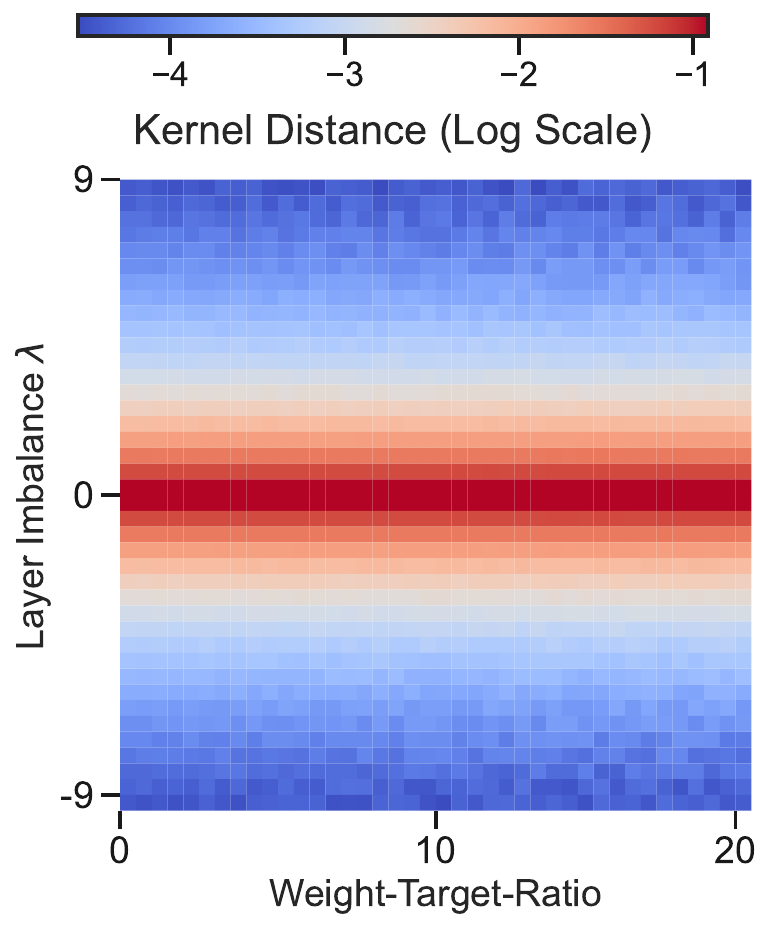}%
    }
    \subfloat[Weight-to-target Scale]{
    \includegraphics[width=0.5\columnwidth]{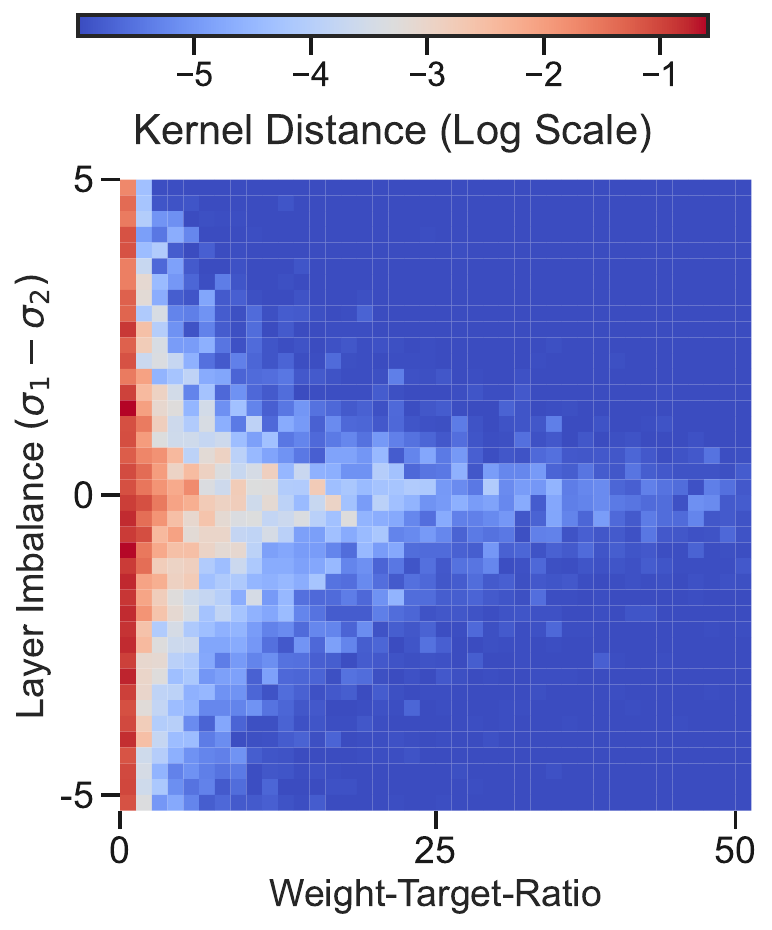}%
    }
    \vspace{-0.2cm}
\caption{\textbf{Two ways to get lazy dynamics.} Figures show the changes in NTK (measure of non-linear dynamics) of linear neural network trained on randomized regression. \textbf{(a)}: In \cref{sec:imba}, when \cref{eq:layer_imbalance} holds, layer imbalance governs the transition between rich (red) and lazy (blue) dynamics. \textbf{(b)}: In \cref{sec:weight/target}, when \cref{eq:layer_imbalance} does not hold, larger weight-to-target ratio pushes the dynamics toward lazy regime, although layer imbalance still has an effect. For details of the experiment, see \cref{app:richlazy}.}\label{fig:dynamics_lazy_rich}
\end{figure}

\subsection{Application 2: Controlling grokking with layer imbalance}
Grokking \cite{power2022grokking} refers to a delayed generalization. Grokking was considered as an artifact of algorithmic dataset, but recent studies \cite{kumar2024grokking,lyu2024dichotomy,kunin2024get} showed that grokking is a transition from a lazy overfitting regime to a rich generalizing regime. See \cref{app:grokking} for details. Kunin et al. (\citeyear{kunin2024get}) used upstream initialization to focus learning on earlier layers, reducing the generalization delay in a transformer trained on a modular arithmetic task. See \cref{app:richlazy} or Kunin et al. \citeyearpar{kunin2024get} for details.



\paragraph{Intuition} 
The dynamical feedback principle, valid even with non-linear activations, enables layer imbalance to control which layer is trained significantly. Thus, the layer imbalance can control lazy/rich regimes and grokking.



\section{From weight-to-Target Ratio to Grokking}\label{sec:weight/target}

As discussed in Section \ref{sec:imba}, sufficient layer imbalance can mitigate feedback effects (\cref{fig:dynamics_lazy_rich}(a)). At the same time, \cref{eq:amplifying_dynamics} suggests that mitigation can also arise when $a_i(0)b_i(0)$ closely matches $S_i$, allowing dynamics to complete before feedback takes effect (\cref{fig:dynamics_lazy_rich}(b)). In this section, we use linear neural networks to formalize this intuition through the \emph{weight-to-target ratio} that can explain two lazy regime methods — NTK initialization \cite{jacot2020kernel} and target downscaling \cite{chizat2019lazy, geiger2020disentangling} — within a single framework. We then empirically show that reducing the weight-to-target ratio eliminates the delay in grokking, accelerating generalization.


\begin{figure}[!ht]
    \centering
    \includegraphics[width=0.95\columnwidth]{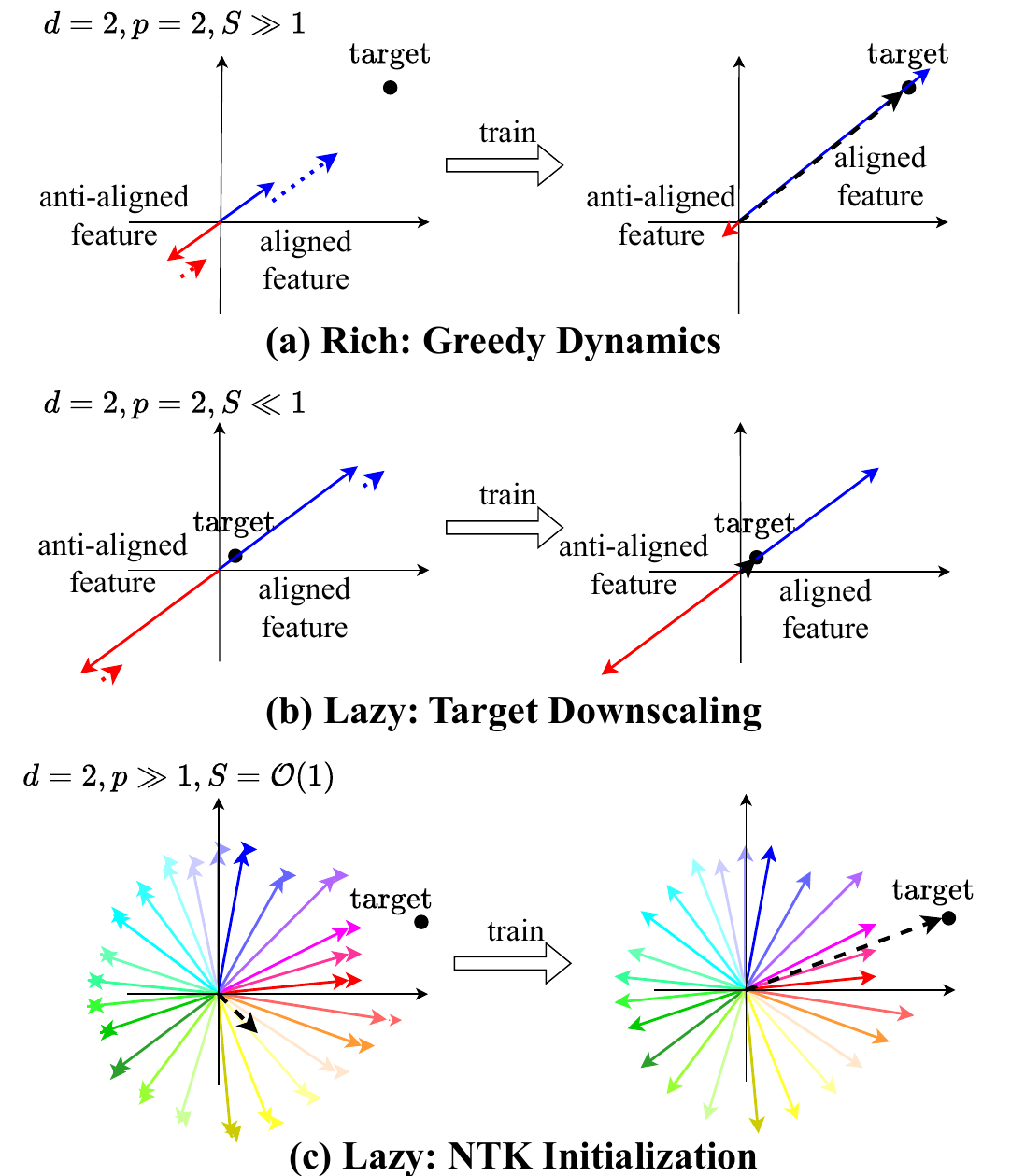}%
    \vspace{-0.3cm}
    \caption{\textbf{Illustration of rich vs. lazy dynamics.} The colored arrows represent features (e.g., $a_ib_ix_i$ in \cref{eq:one_linear_nn}), the dotted arrows represent their gradients, and the black dashed arrow represents the learned function in function space. \textbf{(a):} With smaller initialization and larger $S$, amplifying dynamics reinforce the growth of the aligned feature (larger parameters lead to faster growth) while mitigating the reduction of the anti-aligned feature (smaller parameters lead to slower decay), resulting in the aligned feature to dominate. \textbf{(b):} With a smaller target scale (or larger initialization), minimal adjustments per feature are sufficient to fit the target (lazy).\textbf{(c):} Under the NTK initialization with numerous features, gradients toward the target are distributed across the features, resulting in negligible movement for each feature to fit the target (lazy). See \cref{app:functionspace} for details of the illustration. }\label{fig:coupling_fig}
\end{figure}

\begin{figure*} 
    \centering
    \subfloat[Default]{%
        \includegraphics[width=0.2 \textwidth]{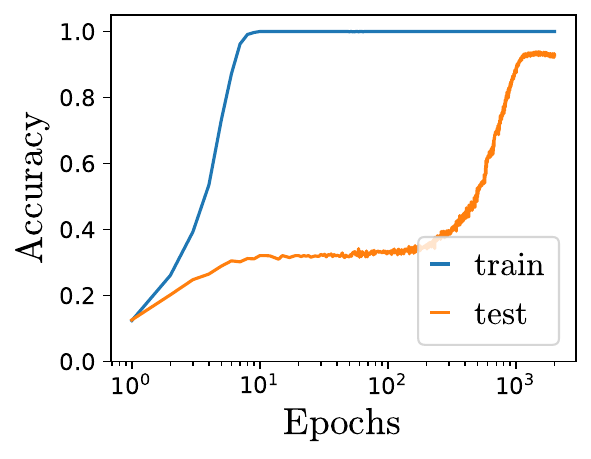}%
        }%
    \subfloat[Weight Downscaling]{%
        \includegraphics[width=0.2 \textwidth]{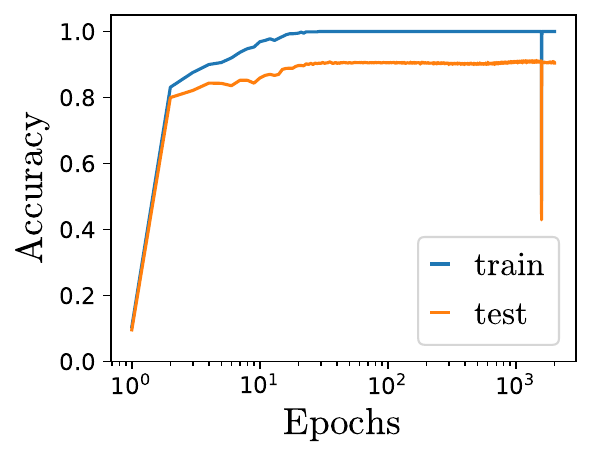}%
        }%
    \subfloat[Target Upscaling]{%
        \includegraphics[width=0.2 \textwidth]{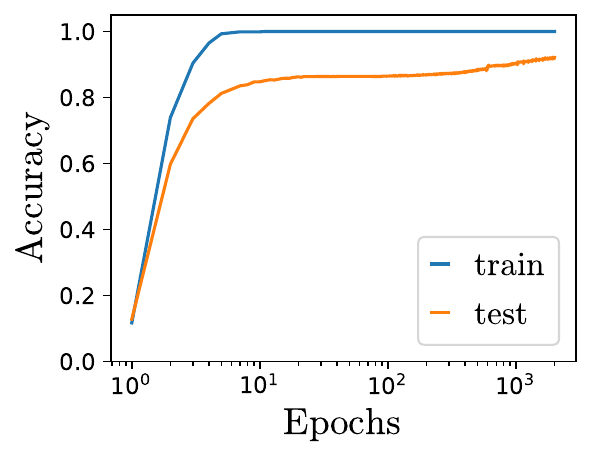}%
        }%
    \subfloat[Input Downscaling]{%
        \includegraphics[width=0.2 \textwidth]{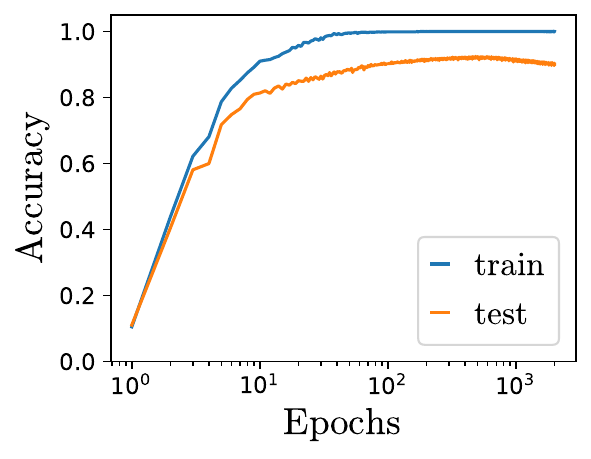}%
        }%
    \subfloat[Output Downscaling]{%
        \includegraphics[width=0.2 \textwidth]{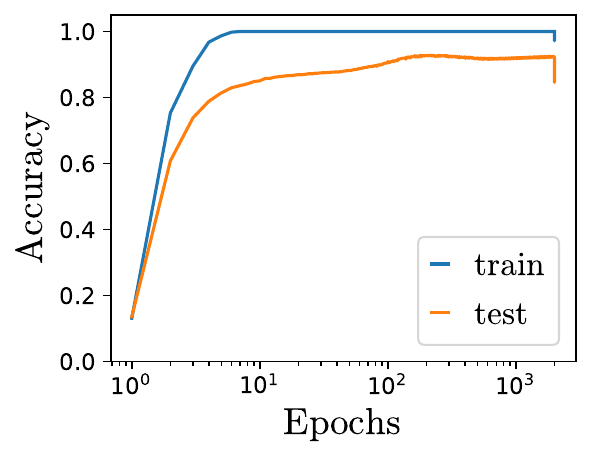}%
        }%

    \caption{\textbf{Removing (accelerating) grokking in a Multilayer Perceptron (MLP).} A 4-layer tanh MLP with large weight initialization, trained on 1000 MNIST samples, exhibits grokking (a). All techniques that reduce the weight-to-target ratio $\Sigma_0/S$ (b–e), described in \cref{sec:weight/target}, place the model in the rich regime and eliminate the generalization delay. See \cref{subsec:app:mlpsetup} for setup details.}\label{fig:removing_grokking_app2}  
\end{figure*}

\subsection{Toy model for NTK initialization and target downscaling}
We assume a 2-layer $p$-wide linear neural network with  scalar input $x$ and scalar output ($d=c=1$):
\begin{align}\label{eq:one_linear_nn}
f(x) = \frac{x}{Z}\sum_{i=1}^p a_ib_i, \quad f^*(x) = xS,
\end{align}
where $x \in \mathbb{R}$, $S > 0$, and $Z \in \mathbb{R}$ is a normalization or output rescaling constant. In \cref{app:lee}, we solve the exact dynamics of \cref{eq:one_linear_nn} for any initialization (e.g., Gaussian distribution). For simplicity, we demonstrate a special case where $a_i(0)$ and $b_i(0)$ are initialized from $\{-1,1\}$ with equal probabilities.

The dynamics can be decomposed into two modes: the \textbf{aligned} mode $\mathcal{I}_{+} = \{ i : a_i(0)b_i(0)S>0\}$, and the \textbf{anti-aligned} mode 
$\mathcal{I}_{-} = \{ i : a_i(0)b_i(0)S<0\}$,
where $i \in \{1,\cdots, p\}$. This gives
\begin{align*}
\sum_{i \in \mathcal{I}_{\pm}} \frac{d a_i^2}{dt} &= \mp \frac{2}{Z}(\theta - S)   \sum_{i \in \mathcal{I}_\pm} a_i^2 ,
\end{align*}
where we used the conserved quantity $a_i^2 - b_i^2=0$ and  expressed $\theta := Z^{-1}\sum_{i=1}^p a_ib_i$ so that $f(x) = x\theta$. Letting $\Delta \theta := Z^{-1} \left( \sum_{i \in \mathcal{I}_{+}} a_ib_i -\sum_{j \in \mathcal{I}_{-}} a_jb_j \right)$, the above equations can be rearranged into
\begin{align}
\frac{d \theta}{dt} &= - 2\frac{\Delta \theta}{Z} (\theta -S), \label{eq:new_theta}\\
\frac{d (\Delta \theta)}{dt} &= - 2\frac{\theta}{Z} (\theta -S).\label{eq:new_diff_theta}
\end{align}

If the right hand side of \cref{eq:new_diff_theta} is 0 or $\Delta \theta$ is constant, \cref{eq:new_theta} reduces to linear dynamics (\cref{eq:lin_dynamics}). Thus, variations in $\Delta \theta$ reflect the differences in the learning speed of aligned and anti-aligned modes. In the rich (greedy) setup, the aligned mode greedily saturate before the anti-aligned mode (\cref{fig:coupling_fig}(a)), while in the lazy (NTK initialization and target downscale) setup, both modes evolve infinitesimally.

\paragraph{Target downscaling} Target downscaling assumes $Z =1$, $S \rightarrow 0$, and $\theta(0) = 0$ (or $p$ even). Then $\frac{d \Delta \theta}{dt} \approx 0$ as both $\theta$ and $S$ are close to $0$. In this case, all features barely need to move to fit the target (\cref{fig:coupling_fig}(b)), avoiding greedy dynamics where a few features dominate. For more rigorous arguments, see Chizat et al. (\citeyear{chizat2019lazy}).

\paragraph{NTK initialization} The NTK limit requires $p \rightarrow \infty$ with $Z = \sqrt{p}$. Then $\Delta \theta / Z = \mathcal{O}(1)$ is fixed with $\frac{d\Delta \theta}{dt} \approx 0$ as both $\theta$ and $S$ are of $\mathcal{O}(1)$ as $p \to \infty$. Intuitively, $\Delta \theta$ is fixed because the dynamics spread the required movement $\theta - S = \mathcal{O}(1)$ among $p$ features, resulting in minimal change for individual features. To be specific, each feature moves $\mathcal{O}(p^{-1/2})$, and the collective change of $p$ features sums to $\mathcal{O}(1)$ change in $\theta$. See \cref{fig:coupling_fig}(c) for illustration and Luo et al. (\citeyear{luo2021phase}) for more rigorous arguments.

\paragraph{General initialization} In \cref{app:lee}, we solve the exact dynamics of \cref{eq:one_linear_nn} on general initialization of $a$ and $b$ (\cref{thm:gamma}). The analogue of $\Delta \theta$ or the deviation among the features are captured in the following quantity
\begin{equation*}
\gamma_+ = \frac{S + \sqrt{\Sigma_0^2 - \mathcal{S}_0^2 + S^2}}{\Sigma_0 + \mathcal{S}_0},
\end{equation*}
where 
\begin{align*}
    \mathcal{S}_0 &= \sum_{i=1}^{p} \frac{a_i(0) b_i(0)}{Z} , \quad \Sigma_0 = \sum_{i=1}^{p} \frac{a_i(0)^2 + b_i(0)^2}{2Z}.
\end{align*}
The constant $\gamma_+$, loosely speaking, determines how $a_i$ and $b_i$ scale with their initial values. Explicitly, the final limiting value of $a_i(\infty) b_i(\infty)$ is given as
\[
\left( \frac{a_i(0) + b_i(0)}{2} \right)^2 \gamma_{+} - \left( \frac{a_i(0) - b_i(0)}{2} \right)^2 \gamma_{+}^{-1}.
\]

For $S > S_0$, a large $\gamma_+$ indicates a non-trivial changes in parameters with greater difference in contributions between more and less aligned features --- the rich dynamics. However, $\gamma_+ \approx 1$ implies trivial dynamics with negligible changes in $a_i$ and $b_i$, resulting in lazy dynamics. See \cref{app:lee} for further discussion.

Note that the ratio between $S$ (target scale) and $\Sigma_0$ (norm at initialization) determines $\gamma_+$ (\cref{remark:Sigma0S}). The small initialization in \cref{sec:emergence,sec:greedy} maps to small $\Sigma_0/S$, while the NTK initialization (by infinite width) and target downscaling (by small $S$) map to large $\Sigma_0/S$. 

\subsection{Application: Controlling grokking with weight-to-target ratio}
As discussed in Section \ref{sec:imba}, grokking is a transition from the lazy (overfitting) regime to the rich (generalizing) regime. Instead of waiting for the rich regime through overtraining, we can set the model in the rich regime from the initialization. Kumar et al.~(\citeyear{kumar2024grokking}) empirically verified that the methods to escape the lazy regime (decreasing $\Sigma_0/S$) in our toy model also eliminate the delay in grokking in various settings.

Several methods decrease $\Sigma_0 /S$, which coincides with the transition out of the lazy regime: we can increase $S$ by upscaling the target or analogously downscaling the input features (as $S$ is the ratio between $y$ and $x$). 
Alternatively, we can decrease $\Sigma_0$ by downscaling initialization \cite{jeffares2024deep} or by downscaling the output of the function by increasing $Z$.  See \cref{fig:removing_grokking_app2} or Kumar et al. (\citeyear{kumar2024grokking}) for empirical verification and a summary of methods that remove grokking.


\paragraph{Intuition}
For greedy dynamics to emerge, the target must be sufficiently distant from initial output function for better and worse features (modes) to form a gap. 
Reducing the weight-to-target ratio widens this gap, promoting greedy dynamics that generalize. Based on this understanding, we can interpret grokking as a special case of a poorly initialized setup in the lazy regime.

\section{Discussion}
We demonstrated how the feedback principle unifies seemingly unrelated phenomena as consequences of dynamics of the layerwise structure. In \cref{sec:emergence,sec:greedy}, we explained emergence, scaling laws, and neural collapse through the amplifying dynamics from feedback principle. In \cref{sec:imba,sec:weight/target}, we linked the lazy regime and grokking to the lack of amplification. Throughout, layerwise linear models with specific conditions served as the formal bridge between phenomenon and intuition. 


While our investigation mainly focused on layerwise linear models and four phenomena, similar models have also been applied to understand other phenomena, including various architectures \cite{pinson2023linear,heij2007introduction,ahn2023linear,katharopoulos2020transformers},
self supervised learning \cite{simon2023stepwise}, 
modularization \cite{jarvis2024specialization,zhang2024understanding}, specialization \cite{jarvis2024theory}, fine-tunning \cite{lippl2023inductive}, generalization \cite{advani2020high}, and neural representation in neuroscience \cite{saxe2019mathematical,richards2019deep,nelli2023neural,flesch2022orthogonal,farrell2023lazy,lippl2024mathematical}. 



\section{Alternative Views}
While layerwise linear models provide valuable insights, we acknowledge their limitations. These models are not used in practice, as DNNs lose expressiveness and performance without non-linearities. Some phenomena may require non-linear activations for explanation, making layerwise linear models too simple. However, given their success in explaining various phenomena --- such as grokking which was initially thought to be linked to complicated nature of algorithmic datasets \cite{power2022grokking} --- the limits of layerwise linear models have yet to be fully explored.


\section{Conclusion}
The success of feedback principle in describing seemingly unrelated phenomena highlights that layerwise structure, even without non-linear activations, is a defining characteristic of DNNs. The simplicity and solvability of this approach in determining whether DNN phenomena result from dynamics make the approach highly appealing. 

While layerwise linear models do not offer a complete theory of deep learning, they mark a critical milestone by clarifying how dynamical phenomena emerge from layerwise architecture. They lay the groundwork for deeper intuition and more comprehensive frameworks, helping avoid potential overcomplications.
Thus, in this position paper, we argue that layerwise linear models must be solved first to understand a dynamical phenomenon in DNNs. 

\subsection{Call to action: Research directions}
\paragraph{The role of depth} As explored in \cite{kunin2024get}, initialization can be controlled to concentrate feature learning in a few layers or distribute it across many.
The former effectively reduces the depth, while the latter fully leverages it. When do these approaches differ in generalization, and why? If performances are similar, what is the true role of depth — purely dynamical or something more fundamental? See for example, Lyu \& Zhu (\citeyear{lyu2023implicit}) and Wang \& Jacot (\citeyear{wang2023implicit}). 

\paragraph{Can we incorporate non-linearity as perturbation?} Physicists often extend simple and solvable systems with perturbative methods to describe more complex systems \cite{SULEJMANPASIC2018273}. Since ReLU is partially linear, can we similarly integrate non-linearity into layerwise linear models as a perturbation? Several studies have laid the groundwork in this direction, highlighting the need for continued exploration \cite{saxe2022neural,kunin2024get,zhang2024bias}. 

\paragraph{The role of batch normalization}
Batch normalization \cite{ioffe2015batch} has an interesting connection to the setup of layerwise linear models. For a given layer, it normalizes the features and introduces a learnable parameter — $\gamma^{(k)}$ in Ioffe \& Szegedy (\citeyear{ioffe2015batch}) — that scales the input. The Gaussianized inputs align with the whitening assumption in linear neural networks, while the scaling parameter increases similarity to diagonal linear neural networks. Further research in this direction may provide a connection between batch normalization and the rich regime.

\paragraph{The role of skip connections}
Skip connections introduce varying degrees of multilinearity across layers, potentially inducing lazy dynamics in some and rich dynamics in others, thereby assigning distinct roles to each layer. A deeper dynamical understanding of skip connections may clarify the roles of layers at different depths.

\paragraph{The role of optimizers}
The role of optimizers in shaping amplifying dynamics — such as whether they enhance or regulate greedy behavior — remains underexplored. While the inductive bias of stochastic gradient descent has been studied \cite{lyu2023implicit}, the impact of second-order methods is largely unknown. A solvable model could provide clearer insights into the benefits and limitations of these methods in the context of rich dynamics.

For additional research directions, see \cref{app:research_direction}.

\clearpage


\appendix

\section*{Acknowledgement}
We thank Nayara Fonseca de S{\'a}, Sewook Oh, Zhanxing Zhu, and Bochen Lyu for helpful discussions. 
SL was supported by the National Research Foundation of Korea (NRF) grants No.2020R1A5A1016126 and No.RS–2024-00462910.
CD was supported by the Gatsby Charitable Foundation (GAT3755).
This research was funded in whole, or in part, by the Wellcome Trust [216386/Z/19/Z].
YP is supported by the faculty research fund of Sejong University in 2025. CL was supported by the Engineering and Physical Sciences Research Council grant EP/W524311/1. This research was funded in whole or in part by the Wellcome Trust [216386/Z/19/Z]. For the purpose of Open Access, the author has applied a CC BY public copyright licence to any Author Accepted Manuscript version arising from this submission.
The authors would also acknowledge support from His Majesty’s Government in the development of this research.

\section*{Impact Statement}
This position paper advocates for research directions based on a dynamic principle and intuitively explains phenomena of DNN. Therefore, we assess no societal impact that requires attention.

\bibliographystyle{icml2024}
\bibliography{references}

\newpage

\appendix
\onecolumn
\section{Glossary}\label{app:glossary}
\renewcommand{\arraystretch}{1.18}
\begin{tabular}{ p{4.cm} p{1.2cm} p{8cm} p{2cm} }
Name & Symbol /Abbr. & Definition & Cross-ref \\
\hline\hline 
Amplifying dynamics & & The rich-get-richer dynamics of layerwise linear models stemming from the dynamical feedback principle. &\cref{sec:emergence}\\ 
Dimension of input features & $d$ & \\
Dimension of hidden layer & $p$ & \\
Dimension of output & $c$ & \\
Dynamical feedback principle &   & The principle that magnitude of one layer governs the evolution of the other layers. & \cref{sec:amplifying_dynamics}\\
Emergence &   & A phenomenon where DNN suddenly gains performance on a subtask or skill. & \cref{app:emergence}\\
Gradient flow & GF & A continuous variant of gradient descent algorithm & \\
Greedy dynamics &   & The dynamics of low-rank bias.& \cref{sec:greedy}\\
Grokking &   & A delayed generalization. & \cref{app:grokking}\\
Input features & $x$ & $x \in \mathbb{R}^d$. We notate the element as $x_i$ and the input feature space with $\mathcal{X}$. & \cref{sec:setup}\\

Layerwise linear models &   & Two layer models in which the output is multilinear with respect to each layer of parameters.  & \cref{app:functionspace} \\
Lazy regime &   & Regime where DNN's evolution follows a linear dynamics  & \cref{sec:imba}\\
Linear dynamics &   & The dynamics of linear model  & \cref{eq:lin_dynamics}\\
Linear model &   & One layer model in which the output is linear with respect to parameters. & \cref{app:functionspace} \\
Low-rank bias &   & An inductive bias of linear neural networks toward low-rank representation.& \cref{sec:greedy}\\
Mode &   & A distinct degree of freedom the dynamical system decoupled into. & \cref{app:functionspace}\\
Multilayer perceptron & MLP & Also known as fully connected network (FCN) & \\  

Neural collapse & NC & The collapse of training set's last layer features vectors into a low-rank structure. & \cref{app:NC} \\
Normalization constant & Z & The constant that rescales the output of the model. & \cref{eq:one_linear_nn} \\
Neural tangent kernel & NTK & kernel with the gradient operator as the feature map. &  \\

Parameters       (diagonal linear neural network) & $a,b$ &  $a,b \in \mathbb{R}^d$ $a_i,b_i$ are used to indicate the $i^{\mathrm{th}}$ entry. In \cref{sec:weight/target}, the notation $a,b \in \mathbb{R}^p$ was duplicately used for the parameters of linear neural networks  for demonstrative purposes. & \cref{eq:lin_mullin}\\

Parameters       (linear neural network) & $W_1,W_2$ & $W_1 \in \mathbb{R}^{d\times p}$, $W_2 \in \mathbb{R}^{p\times c}$. $W^{(1)}_{ij},W^{(2)}_{ij}$ are used to indicate the element of matrix.  &\cref{eq:linear_nn} \\

Rich regime &   & Regime where DNN's evolution follows a non-linear dynamics & \cref{sec:imba}\\

Sigmoidal growth &   & The evolution amplifying dynamics when $a,b \ll S$. & \cref{eq:dynamics}\\
Skill & $g_k(x)$ & A subtask of the problem. &\cref{sec:emergence} \\ 
Stage-like training &   & The dynamics when modes are learned in sequences without overlap in their saturation. & 
\cref{fig:dynamics_linear_comparison}(b)\\
Target (scale) & $S_i$ or $S$ & The multiplicative constant to the feature in the target function.  &\cref{eq:target}\\
Weight-to-target ratio & $\Sigma_0/S$ & The ratio between the normalized (by $Z$) norm of initial weights and the target scale $S$. & \cref{sec:weight/target} \\

\end{tabular}
\section{Definitions}\label{app:functionspace}
In this section, we provide more detailed definitions of the following terms: linear models, layerwise linear models, modes, features, function space. We additionally explain \cref{fig:coupling_fig}. We assume scalar output ($c=1$) and focus on models with $f:  \mathcal{X} \rightarrow \mathbb{R}$ where $ \mathcal{X}$ is the input feature space.

\subsection{Linear models}
Linear models are models in which the output is completely linear in \textbf{parameters}.
The definition allows non-linear relationship between the input features and the output. The general form of a linear model is
\begin{equation}
f(x) = \sum_{i=1}^d \phi_i(x)\theta_i ,
\end{equation}
where $\phi_i:  \mathcal{X} \rightarrow \mathbb{R}$ is a feature map that can be a non-linear function.

\subsection{Layerwise linear models}
Layerwise linear models are 2-layer models in which the output is multilinear with respect to each layer of parameters.
The definition also allows non-linear relationship between the input features and the output. The general form of a layerwise linear model is
\begin{equation}
f(x) = \sum_{(i,j) \in \mathcal{V}}\sum_{j=1}^p \phi_i(x)a_{ij}b_j ,
\end{equation}
where $\phi:  \mathcal{X} \rightarrow \mathbb{R}$ is a map from the input data space $ \mathcal{X}$ to the reals, and $\mathcal{V}$ is an architecture dependent set of connections between $\phi_i(x)$ and hidden neurons. Note that the presence of $\phi$ in place of $x_i$ does not change the dynamics, as long as random variable $\phi_i(x)$ satisfy the assumptions in the main text, such as $\mathbf{E}[\phi_i(x)\phi_j(x)] = 0 $ for $i \neq j$.

\begin{figure}[ht]
    \centering
    \includegraphics[width=0.7\columnwidth]{figures/MFEF_init4.drawio.pdf}%
    \vspace{-0.3cm}
    \caption{\textbf{Figure 7 in the main text.} }\label{fig:app:7}
\end{figure}

\subsection{Modes}
For a given dynamics, we define modes as distinct degrees of freedom into which the dynamical system decouples.

\subsection{Features}
We return to the special case of $\phi_i(x) =x_i$ to maintain greater consistency with the main text. For the last layer feature vectors used in \cref{sec:greedy}, see \cref{app:NC} instead.

We define the $k^{\mathrm{th}}$ feature as the post-activation of the $k^{\mathrm{th}}$ neuron of the last layer multiplied by the respective last layer parameter. For a linear model, the $k^{\mathrm{th}}$ feature is 
\begin{equation}
h_k := x_k\theta_k, \quad\quad k \in \{1,2, \cdots, d\}.
\end{equation}
For a linear neural network, the $k^{\mathrm{th}}$ feature is
\begin{equation}\label{eq:app:features}
h_k := \sum_{i=1}^d x_iW^{(1)}_{ik}b_k,\quad\quad k \in \{1,2, \cdots, p\}.
\end{equation}
Note that by the definition, the sum of all features equals the output function:
\begin{equation}\label{eq:app:featuresum}
f = \sum_{k} h_k.
\end{equation}

\subsection{Function space and the explanation of Figure. 7}

We define the {\bf function space} as the vector space spanned by $x_1, \cdots, x_d$: that is, we only consider linear functionals in variables $x_1, \cdots, x_d$ among all functionals on $\mathcal{X}$. Because $x$ is a $d$ dimensional random variable with full ranked covariance matrix, $x_1, \cdots, x_d$ are linearly independent and thus form the basis of function space.

Given our orthogonality condition of features ($\mathbf{E}[x_ix_j] = 0$ for $i \neq j$), we can use the input feature as the (unnormalized) basis, and express the target function $f^*(x) = \sum_{i}^d x_i S_i $ as a vector in this function space with coordinate $[S_1, S_2, \cdots, S_d]$. Likewise, the features can be expressed as a vector in the function space, allowing visualization. 

In \cref{fig:app:7}(a), we have two axis because $d=2$ ($[x_1,x_2]$), and two features because $p=2$ (\cref{eq:app:features}). The blue feature is aligned ($\mathbf{E}[h^Tf^*] >0$) and the red feature is anti-aligned ($\mathbf{E}[h^Tf^*] <0$) to the target, analogous to the aligned and anti-aligned features in the $d=1$ setup of the main text (\cref{eq:one_linear_nn}). The sum of features equals the current output function (\cref{eq:app:featuresum}) or the black dashed line. The training ends when the output function equals the target function (black dot). 

In \cref{fig:app:7}(c), we observe $p \gg 1$ features (colored arrows) sum to a smaller output function (dashed black line). A collection of infinitesimal changes in these features can sum up to an $\mathcal{O}(1)$ change in expressing the target function.
\section{Derivation for completeness}\label{app:simple_math}
This section includes simple derivations for completeness. For the original derivations in a more general setup, we refer to the references mentioned in the main text.

\subsection{Dynamics of linear model}
The gradient flow equation is 
\begin{align}
\frac{d \theta_i}{dt} = - \frac{\partial \mathcal{L}}{\partial \theta_i},
\end{align}
where $\mathcal{L}$ is the MSE loss (\cref{eq:loss}). Then
\begin{align}
\frac{d\theta_i}{dt} &= - \mathbf{E}\left[\frac{\partial}{\partial \theta_i}\frac{1}{2}(f(x) -f^*(x))^2\right] \\
&= - \mathbf{E}\left[x_i\left(\sum_{j=1}^p (\theta_j -S_j)x_j\right)\right]\\
&= -\mathbf{E}\left[x_i^2\right](\theta_i -S_i). \label{eq:app:lin_dynamics}
\end{align}
In the second line, we used the definition of the linear model (\cref{eq:lin}) and the target function (\cref{eq:target}). In the last line, we used our assumptions that the input features are orthogonal ($\mathbf{E}[x_ix_j]=0$ if $i \neq j$) and that we have an infinite training set (the expectation is over the true distribution).

Solving \cref{eq:app:lin_dynamics}, we obtain
\begin{align}
\log(S_i-\theta_i(t))-\log(S_i-\theta(0)) =  -\mathbf{E}\left[x_i^2\right]t.
\end{align}
Using the zero initialization condition of $\theta(0) =0$, we obtain \cref{eq:dynamics_linear}:
\begin{equation}
\theta_i/S_i = 1-e^{-\mathbf{E}[x_i^2] t}.
\end{equation}

\subsection{Dynamics of diagonal linear neural network}\label{derivation:toy_dynamics}
Starting with the gradient flow equation in diagonal linear neural networks,
\begin{align}
\frac{da_i}{dt} = - \frac{\partial \mathcal{L}}{\partial a_i}, \quad \frac{db_i}{dt} = - \frac{\partial \mathcal{L}}{\partial b_i}.
\end{align}

Analogous to \cref{eq:app:lin_dynamics}, we can use the orthogonality condition ($\mathbf{E}[x_ix_j]=0$ if $i \neq j$) to get
\begin{align}
\frac{da_i}{dt} &= - \mathbf{E}\left[\frac{\partial}{\partial a_i}\frac{1}{2}(f(x) -f^*(x))^2\right] \\
&= - \mathbf{E}\left[b_ix_i\left(\sum_{j=1}^p (a_jb_j -S_j)x_j\right)\right]\\
&= -b_i\mathbf{E}\left[x_i^2\right](a_ib_i -S_i). \label{eq:app:dlnn_dynamic}
\end{align}

We can analogously obtain $\frac{db_i}{dt}$, and the evolution of $a_ib_i$ is
\begin{align}
\frac{d (a_ib_i)}{dt} &= \frac{da_i}{dt}b_i + a_i\frac{db_i}{dt} \\
&= -\mathbf{E}\left[x_i^2\right] (b_i^2 + a_i^2)(a_i b_i-S) \\
&= -2\mathbf{E}\left[x_i^2\right] a_ib_i(a_ib_i-S),\label{eq:app:phew}
\end{align}
where we used \cref{eq:app:dlnn_dynamic} (and its equivalent for $b_i$) in the second line and the condition $a_i=b_i$ in the last line. Assuming $a_i(0)b_i(0) < S$, we can solve the differential equation to obtain
\begin{align}
a_i(t)b_i(t) = \frac{S_i}{1+\left(\frac{S_i}{a_i(0)b_i(0)} - 1\right)e^{-2S_i\mathbf{E}[x_i^2]t}}.
\end{align}

For a general derivation when $a_i \neq b_i$, see  Appendix A of Saxe et al. (\citeyear{saxe2013exact}).

\subsection{Derivation of the magnitude difference conservation in linear neural network}
For the sake of readability, we restate the dimensions of inputs, weights, and outputs (labels) for linear neural networks:
\begin{equation}
x \in \mathbb{R}^{d \times 1}, \quad y \in \mathbb{R}^{c \times 1}, \quad W_1 \in \mathbb{R}^{d \times p}, \quad W_2 \in \mathbb{R}^{p \times c}.
\end{equation}

For a linear neural network, the gradient flow equation is 
\begin{align}\label{eq:app:gf_lnn}
\frac{dW_1}{dt} = - \frac{\partial \mathcal{L}}{\partial W_1}, \quad \frac{dW_2}{dt} = - \frac{\partial \mathcal{L}}{\partial W_2}.
\end{align}
The loss in the summand notation is 
\begin{equation}\label{eq:app:loss_summand}
\mathcal{L} = \frac{1}{2}\mathbf{E}\left[\sum_k\left(\sum_{i,j} x_iw^{(1)}_{ij}w^{(2)}_{jk} - y_k\right)\left(\sum_{i',j'} x_{i'}w^{(1)}_{i'j'}w^{(2)}_{j'k} - y_k\right) \right],
\end{equation}
where $w^{(1)}_{ij}$ and $w^{(2)}_{jk}$ are elements of matrix $W_1$ and $W_2$, respectively.  The derivative for the first matrix $W_1$ is
\begin{align}
\frac{\partial \mathcal{L}}{\partial w^{(1)}_{ij}}=\mathbf{E}\left[\sum_k\left( x_iw^{(2)}_{jk}\right)\left(\sum_{i',j'} x_{i'}w^{(1)}_{i'j'}w^{(2)}_{j'k} - y_k\right)\right] \\
=\mathbf{E}\left[x_i\left(\sum_{i',j',k} x_{i'}w^{(1)}_{i'j'}w^{(2)}_{j'k}w^{(2)}_{jk} - y_kw^{(2)}_{jk}\right)\right]
\end{align}
and this can be expressed in the matrix form as
\begin{equation}\label{eq:app:w1}
\frac{\partial \mathcal{L}}{\partial W_1} = \mathbf{E}\left[x(x^TW_1W_2-y^T)W_2^T\right].
\end{equation}

Likewise for $W_2$, 
\begin{equation}
\frac{\partial \mathcal{L}}{\partial w^{(2)}_{jk}}=\mathbf{E}\left[\left(\sum_{i} x_iw^{(1)}_{ij}\right)\left(\sum_{i',j'} x_{i'}w^{(1)}_{i'j'}w^{(2)}_{j'k} - y_k\right)\right],
\end{equation}
which in the matrix form is
\begin{equation}\label{eq:app:w2}
\frac{\partial \mathcal{L}}{\partial W_2} = \mathbf{E}\left[W_1^Tx(x^TW_1W_2-y^T)\right].
\end{equation}
Using \cref{eq:app:gf_lnn,eq:app:w1,eq:app:w2}, we obtain
\begin{equation}\label{eq:UFMdynamics}
W_1^T\frac{dW_1}{dt} = \frac{dW_2}{dt}W_2^T \quad \Rightarrow \quad \frac{d}{dt}(W_1^TW_1 -W_2W_2^T)=0.
\end{equation}

\subsection{Derivation of Equation (12)}
Here, we show that a special initialization of $W_1$ and $W_2$ leads to \cref{eq:dynamics_saxe} and argue that a sufficiently small initialization makes the assumption plausible. For a more rigorous arguments, see \cite{saxe2013exact, mixon2020neural,ji2018gradient,arora2018convergence,lampinen2018analytic,gidel2019implicit,tarmoun2021understanding,braun2022exact,domine2024lazy}, especially more recent works for formal handling of initialization.

Let the singular value decomposition of the correlation matrix be
\begin{equation}\label{eq:app:eig_corr}
U^TP V = \mathbf{E}\left[xy^T\right].
\end{equation}
We assume whitened input
$\mathbf{E}[xx^T]=I$, $W_1(0) = U^TAR$, and $W_2(0)=R^TAV$. The diagonal matrix $A$ is of rank $c$ and comprises sufficiently small singular values, while $R \in \mathbb{R}^{c \times c}$ is an orthogonal matrix. These assumptions on $W_1$ and $W_2$ immediately shows that layers are balanced:
\begin{equation}\label{eq:app:epsilon}
W_1^TW_1 - W_2W_2^T = 0.
\end{equation}
Denote
\begin{align}
    \widetilde{W}_1 = UW_1 R^T, \quad \widetilde{W}_2 = R W_2 V^T.
\end{align}
Then from the gradient dynamics
\begin{align}
    \frac{d W_1}{dt} = -\mathbf{E} [xx^T]W_1 W_2W_2^T + \mathbf{E}[xy^T] W_2^T = - W_1 W_2 W_2^T + U^T P V W_2^T,
\end{align}
we have
\begin{align}
    \frac{d\widetilde{W}_1}{dt} =\frac{d(U W_1 R^T)}{dt} &= -(U W_1 R^T)(RW_2V^T)(VW_2^TR^T)+(UU^T)P(VW_2^TR^T) \\
\label{eq:w1tilde_dynamics}    &= -\widetilde{W}_1 \widetilde{W}_2 \widetilde{W}_2^T + P \widetilde{W}_2^T.
\end{align}
Similarly, for $\widetilde{W_2}$,
\begin{align} \label{eq:w2tilde_dynamics}
    \frac{d\widetilde{W}_2}{dt} = -\widetilde{W}_1^T \widetilde{W}_1 \widetilde{W}_2 + \widetilde{W}_1^T P.
\end{align}

As both $\widetilde{W}_1$ and $\widetilde{W}_2$ starts with diagonal matrix $\widetilde{W}_1(0) = \widetilde{W}_2(0) = A$, and all matrices in \cref{eq:w1tilde_dynamics} and \cref{eq:w2tilde_dynamics} are diagonal, $\widetilde{W}_1$ and $\widetilde{W_2}$ are always diagonal. Also from
\begin{equation}
    \widetilde{W}_1^T \widetilde{W}_1  - \widetilde{W_2}\widetilde{W}_2^T = R(W_1^T W_1 - W_2 W_2^T )R^T =0,
\end{equation}
we have $\widetilde{W}_1(t) = \widetilde{W}_2(t)$ for any $t$. Denote by $\alpha_i$ the $i^{\mathrm{th}}$ diagonal entry of $\widetilde{W}_1$. We have either from \cref{eq:w1tilde_dynamics} or \cref{eq:w2tilde_dynamics},
\begin{equation}
    \frac{d \alpha_i}{dt} = -\alpha_i^3 + \rho_i \alpha_i,
\end{equation}
where $\rho_i$ is the $i^{\mathrm{th}}$ diagonal entry of $P$ or the $i^{\mathrm{th}}$ singular value of the correlation matrix. Solving this equation gives
\begin{align}
\frac{d\alpha_i^2}{dt} &= -2\alpha_i^2(\alpha_i^2- \rho_i) \quad \Rightarrow  \quad
\frac{\alpha_i^2(t)}{\rho_i} = \frac{1}{1+\left(\frac{\rho_i}{\alpha_i^2(0)} - 1\right)e^{-2 \rho_i t}},
\end{align}
and the dynamics of $W_1 W_2$ is given as
\begin{align}
    W_1 W_2 = U^T A(t)^2 V, \quad A(t)^2 = \begin{pmatrix} \alpha_1(t)^2  & & & \\
    & \alpha_2(t)^2 & \\
    & & \ddots
 \end{pmatrix}.
\end{align}

\paragraph{Discussion on the assumption} The assumption specifies a particular form for the matrix, but with sufficiently small initialization, all matrices approximately meet the conditions.

\section{Emergence and scaling laws}\label{app:emergence}

In this section, we review the emergence and (neural) scaling laws in the literature and discuss in detail how sigmoidal dynamics explain emergence, particularly in relation to data and parameters.

\subsection{Scaling law}
Neural scaling laws refer to the empirical observation that the performance of DNNs follows a power-law relationship with the number of parameters or the amount of data \cite{hestness2017deep, kaplan2020scaling}. This phenomenon has been observed across various architectures and datasets \cite{rosenfeld2019constructive, henighan2020scaling, gordon-etal-2021-data, zhai2022scaling, hoffmann2022training, cabannes2023scaling, bachmann2024scaling}.

To explain the data scaling law, Hutter~\citeyearpar{hutter2021learning} developed a discrete feature model where the frequency of features follows a power-law, leading to a power-law relationship between performance and the number of datapoints. 
Similar assumptions have been used in recent works, often using linear models such as kernels, that a power-law in input distribution can explain scaling laws \cite{spigler2020asymptotic,bordelon2020spectrum,cui2021generalization,bahri2021explaining, maloney2022solvable, bordelon2024dynamical}.

\subsection{Emergence}
In machine learning, emergence refers to a sudden improvement in DNN's performance when a key factor (such as the size of the training set or the number of parameters) increases \cite{ ganguli2022predictability, srivastava2022beyond, wei2022emergent}. 
Such observations are often made in large language models \cite{brown2020language}, where LLMs demonstrate certain abilities, such as logical reasoning, once the model size exceeds a specific threshold. For example, Wei et al. \citeyearpar{wei2022emergent} quoted \emph{``An ability is emergent if it is not present in smaller models but is present in larger models."}

Although a concrete definition of emergence in machine learning remains elusive~\cite{anwar2024foundational} and has been criticized for its sensitivity to measurement~\cite{schaeffer2023emergent}, many theoretical studies have explored its causes. Emergence has been deemed challenging to explain through \emph{``mathematical analysis of gradient-based training”}~\cite{arora2023theory} and is often viewed as a collection of basic skills that interact to suddenly produce a complex ability~\cite{arora2023theory, chen2023skill,yu2023skill,okawa2024compositional}.

\subsection{Scaling laws from emergence}
Similar to the approach of Hutter~\citeyearpar{hutter2021learning}, Michaud et al.~\citeyearpar{michaud2023quantization} proposed a quanta model and the multitask sparse parity dataset to link emergence (a sudden gain in ability) to the scaling laws. The quanta model posits that a skill is either learned or not, depending on the available resources (time, data, and parameters). The multitask sparse parity problem consists of skills whose frequencies follow a power law. By leveraging the power-law distribution of the skill frequencies, they demonstrated that the sum of quantized learning, in the large limit, leads to scaling laws observed in a 2-layer MLP trained on the multitask sparse parity problem.

\subsection{Emergence from dynamics}
As discussed in \cref{sec:emergence}, authors of Nam et al.~\citeyearpar{nam2024exactly} proposed that emergence arises from dynamics. They used a layerwise linear model with prebuilt skill functions (\cref{eq:skill_function}). The learning of a skill corresponds to the learning of a mode, and the sigmoidal dynamics naturally explain the time emergence (\cref{fig:emergence}(a)). However, the connections to data and parameter emergence (\cref{fig:emergence}(b,c)) were less explained in the main text.

The arguments for data and parameter emergence in Nam et al.~(\citeyear{nam2024exactly}) are two-folds: (1) Justifying the multilinear model as a valid approximation of practical DNNs and (2) solving the multilinear model’s dynamics for data and parameter emergence.

\paragraph{Justification of the multilinear model and prebuilt features.} Multilinear models (\cref{eq:lin_mullin}) are aware of different skills due to the prebuilt skill functions while DNNs do not inherently separate skills. Authors of Nam et al.(\citeyear{nam2024exactly}) justify the behavior through the \textbf{layerwise dynamics} (dynamical feedback in our paper): the layerwise structure and the power-law in skill frequency lead to stage-like feature learning where skills emerge in distinct phases and the dynamics of learning the skills remain effectively decoupled. \cref{fig:emergence}(a) shows an example in which neural networks learn different features in a stages.

Intuitively, we can imagine the stage-like feature learning as DNNs prioritizing more frequent skills under resource constraints. In parameter emergence, limited neurons are allocated to express frequent skills only --- as frequent skills are learned faster --- creating a sharp performance gap between more and less frequent skills. Data emergence is less direct, but more samples from the skill speed up learning of the skill. Dynamic feedback amplifies this gap, enabling DNNs to distinguish more frequent skills, while less frequent skills remain entangled with each other, resulting in significantly lower skill performance.

\paragraph{Data and parameter emergence in a multilinear model} In the setup of Nam et al,(\citeyear{nam2024exactly}), \cref{eq:lin_mullin} has only the first $p$ most frequent (with the largest $\mathbf{E}[g_i(x)^2]$) skill functions out of $p^*$ to fully express the target, and the skill functions are mutually exclusive:
\begin{equation}\label{eq:toy}
f(x; a,b) = \sum_{i=1}^{p} a_ib_ig_i(x), \quad\quad  g_i(x)g_j(x) = 0~~\mathrm{if}~~i \neq j.
\end{equation}
The two conditions state that a model will learn the $k^{\mathrm{th}}$ skill if and only if a datapoint from the $k^{\mathrm{th}}$ skill exists in the training set --- Corollary 1 of Nam et al.(\citeyear{nam2024exactly}) --- and the skill function exists in the model ($p \ge k$) --- Corollary 2 of Nam et al.(\citeyear{nam2024exactly}) --- creating emergent (abrupt) property of the model on data and parameters.

For data Emergence, Corollary 1 of Nam et al.(\citeyear{nam2024exactly}) states that a multilinear model is a one-shot learner, meaning a single observation of the $k^{th}$ skill, for infinite training time and infinite parameters, leads to a training of the skill. Thus, we have (with $p,t\rightarrow \infty$): 

\begin{equation}
a_k(\infty)b_k(\infty)=\left\{\begin{matrix}
d_k > 0 : & S \\
d_k = 0 :& \mathcal{R}_k(0)
\end{matrix}\right.
\end{equation}

Given a finite training set, the expected probability of observing the $k^{th}$ skill depends on the training set size. Because of the power-law in skill frequency, this value shows a relatively abrupt change of training set size in log-scale, and thus data emergence. In Nam et al.(\citeyear{nam2024exactly}), the authors extended this model to a $D_k$ shot learner instead of a one-shot learner and used the model to fit \cref{fig:emergence}(b).

For parameter Emergence, Corollary 2 of Nam et al.(\citeyear{nam2024exactly}) states that because the skill functions are mutually exclusive, the model will learn the $k^{th}$ skill - for an infinite training set and time - whenever the $k^{th}$ skill function $g_k$ exists in the model. Formally (with $n,t\rightarrow \infty$), 
\begin{equation}
a_k(\infty)b_k(\infty)=\left\{\begin{matrix}
k \leq p : & S \\
k > p  :& \mathcal{R}_k(0)
\end{matrix}\right.
\end{equation}
where $p$ is the width of the multilinear model. The parameter emergence is trivial as the model will suddenly learn a skill once corresponding skill function is added. In Nam et al.(\citeyear{nam2024exactly}), the authors extended this model such that $N_k$ hidden neurons instead of one to fit \cref{fig:emergence}(c).

One intuitive connection back to stage-like training of MLP is that all neurons, by time-emergence, will be used to fit the \textbf{most frequent skill first}; thus exhausting the limited hidden neurons to express more frequent skills only. This will create an inductive bias toward more frequent skill and justify why a larger width leads to an abrupt learning of a less frequent skill. See Nam et al.~\citeyearpar{nam2024exactly} for a further discussion.
\section{Neural collapse}\label{app:NC}
In this section, we restate the four NC conditions given in Papyan et al. \citeyearpar{papyan2020prevalence}, and discuss their significance. 

\subsection{Setup}
We decompose a neural network, a map from input feature space $\mathcal{X}$ to output $\mathbb{R}^c$, into two components: 1) a map from the input features to the last layer features $\Phi: \mathcal{X} \rightarrow \mathbb{R}^p$ and 2) the last layer (a map from $\mathbb{R}^P$ to $\mathbb{R}^c$). Then, we can express the neural network as
\begin{equation}
f_k(x) = \sum_{i=1}^p\Phi_i(x)W_{ik} + b_k,
\end{equation}
where $W$ and $b \in \mathbb{R}^p$ are the weight and bias terms of the last layer. We use size $n$ \textbf{training set} that is partitioned into $c$ sets $[A_1, A_2, \cdots, A_c]$ of the same size (balanced classification), where each $A_k$ consists of all training datapoints with true label $k$. The $k^{\mathrm{th}}$ class mean vector is 
\begin{equation}
\mu_k := \frac{c}{n}\sum_{x \in A_k} \Phi(x),
\end{equation}
where $n$ is the number of training set. The global mean vector is
\begin{equation}
\mu_g := \frac{1}{c}\sum_{k=1}^{c} \mu_k. 
\end{equation}
The feature within-class covariance $\Sigma_W \in \mathbb{R}^{p \times p}$ is
\begin{equation}
\Sigma_W := \frac{1}{n}\sum_{k=1}^c\sum_{x \in A_k} (\Phi(x) - \mu_k)(\Phi(x) - \mu_k)^T,
\end{equation}
which shows the deviation of last layer feature vectors away from the corresponding class mean vectors. 

\subsection{NC conditions}
The four NC conditions are the following:
\begin{enumerate}
\item \textbf{Within-class variance tends to 0}
\begin{equation}
\Sigma_W \rightarrow 0. 
\end{equation}
\item \textbf{Convergence to simplex ETF}
\begin{equation}
\frac{(\mu_j - \mu_g)^T (\mu_k - \mu_g)}{\|(\mu_j - \mu_g)\|_2 \|(\mu_k - \mu_g)\|_2} \rightarrow \frac{c \delta_{jk}-1}{c-1}.
\end{equation}
\item \textbf{Convergence to self duality}
\begin{equation}
\frac{w_k}{\|w_k\|_2} - \frac{\mu_k - \mu_g}{\|(\mu_k - \mu_g)\|_2} \rightarrow 0.
\end{equation}
\item \textbf{Simplification to nearest class center}
\begin{equation}
\arg\max_k \left( \sum_{i=1}^p \Phi_i(x)W_{ik} + b_k \right) \rightarrow \arg \min_k \|\Phi(x) - \mu_k\|_2.
\end{equation}
\end{enumerate}

Condition 1 states that all feature vectors collapse around their class mean vector. Condition 2 states that the class mean vectors form a simplex ETF structure (\cref{fig:NC}). Condition 3 states that the last layer found the minimum-norm solution for the last layer feature vectors satisfying conditions 1 and 2. The last condition states that the model assesses classification based on the distance between the last layer feature vectors and the class mean vectors.

\section{Imbalanced layer initialization and data-adaptiveness}
\label{app:richlazy}
We extend the discussion on the topic developed in Section 6. Note that we summarize here information mainly taken from Domine et al. \citeyearpar{domine2024lazy} and Kunin et al. \citeyearpar{kunin2024get}.
\subsection{The balanced condition}

Let us formally reintroduce \cref{eq:layer_imbalance}:

\begin{definition}[Definition of \emph{$\lambda$-balanced} Property (\cite{saxe2013exact,marcotte_abide})]
\label{definition:balancedness}
The weights $W_1$ and $W_2$ are said to be \emph{$\lambda$-balanced} if and only if there exists a \textbf{Balanced Coefficient} \(\lambda \in \mathbb{R}\) such that:
\[
 W_2^TW_2 - W_1 W_1^T = \lambda I.
\]

When \(\lambda = 0\), this property is referred to as \textbf{Zero-Balanced}, and it satisfies the condition:
\label{ass:zero-balanced}
\[
 W_2^TW_2 - W_1W_1^T = 0,
\]
\end{definition}

This condition is of interest because it remains conserved for any initial value \(\lambda\), as detailed in \cite{marcotte_abide, domine2024lazy}. It becomes particularly relevant in the  continual learning, reversal learning, and transfer learning settings. 

This quantity can be interpreted as the relative scale across layers.  A straightforward intuition for the relative scale, as compared to the absolute scale (the norm $W_2 W_1$ when target scale is fixed to 1), can be gained by considering the scalar case where the input, output and hidden dimension are equal to 1. In this simple scenario, it is easy to ensure that \(w_1^2 = w_2^2\) satisfies \(\lambda = 0\) while allowing for different absolute scales. For example, \(w_1 = w_2 = 0.001\) or \(w_1 = w_2 = 5\). In such cases, the absolute scale is clearly decoupled from the relative scale. However, in more complex settings, the interaction between relative and absolute scales becomes non-trivial. As we will discussed below, their role in shaping and amplifying dynamics is intricate and highly dependent on the underlying network architecture.

\paragraph{Intuition} Even with non-linearity, one layer's weight controls the other's rate of change (\cref{sec:amplifying_dynamics}). This principle enables targeted training of a layer, allowing tuning between lazy and rich (and thus grokking) dynamics.
This intuition can be formalized as 

\begin{theorem}
 \label{thm:singular-values}
    Under the assumptions of whitened inputs (1), $\lambda$-balanced weights (2), and no bottleneck and with a task-aligned initialization, as defined in \cite{saxe2013exact}, the network function is given by the expression $W_2W_1(t) = \tilde{U}P(t)\tilde{V}^T$ where $P(t) \in \mathbb{R}^{c \times c}$ is a diagonal matrix of singular values with elements $\rho_\alpha(t)$ that evolve according to the equation,
    \begin{equation}
        \rho_\alpha(t) = \rho_\alpha(0) +\gamma_\alpha(t;\lambda)\left(\tilde{\rho}_\alpha - \rho_\alpha(0)\right),
    \end{equation}
    where $\tilde{\rho}_\alpha$ is the $\alpha$ singular value of the correlation matrix and $\gamma_\alpha(t;\lambda)$ is a $\lambda$-dependent monotonic transition function for each singular value that increases from $\gamma_\alpha(0;\lambda) = 0$ to $\lim_{t \to \infty} \gamma_\alpha(t;\lambda) = 1$ defined explicitly in Appendix of \cite{domine2024lazy}.
    We find that under different limits of $\lambda$, the transition function converges pointwise to the sigmoidal ($\lambda \to 0$) and exponential ($\lambda \to \pm \infty$) transition functions,
    \begin{equation}
            \lim_{\lambda \to 0}\gamma_\alpha(t;\lambda) = \frac{e^{2\tilde{s}_\alpha\frac{t}{\tau}} - 1}{e^{2\tilde{s}_\alpha\frac{t}{\tau}} - 1 + \frac{\tilde{s}_\alpha}{s_\alpha(0)}}, \qquad
            \lim_{\lambda \to \pm\infty}\gamma_\alpha(t;\lambda) =
            1 - e^{-|\lambda| \frac{t}{\tau}}.
    \end{equation}
\end{theorem}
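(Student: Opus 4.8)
The plan is to reduce the coupled matrix ODEs \eqref{eq:w1tilde_dynamics}--\eqref{eq:w2tilde_dynamics} to a family of decoupled scalar ODEs, one per singular-value index, exactly as in the zero-balanced case treated in \cref{app:simple_math}, and then solve each scalar ODE in closed form keeping $\lambda$ as a parameter. First I would invoke the task-aligned initialization and whitening assumptions to diagonalize: writing $\widetilde W_1 = U W_1 R^T$ and $\widetilde W_2 = R W_2 V^T$ for the appropriate singular vectors of the correlation matrix, the gradient-flow equations become diagonal in the same way as \eqref{eq:w1tilde_dynamics}--\eqref{eq:w2tilde_dynamics}, so the problem collapses to tracking, for each $\alpha$, a pair of scalars $(a_\alpha(t), b_\alpha(t))$ whose product $\rho_\alpha = a_\alpha b_\alpha$ is the $\alpha$-th singular value of $W_2 W_1$. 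The $\lambda$-balanced condition $W_2^T W_2 - W_1 W_1^T = \lambda I$ is conserved under the flow (this is the analogue of \eqref{eq:conserved}, shown in \cref{app:simple_math}), so it descends to the scalar invariant $b_\alpha^2 - a_\alpha^2 = \lambda$ for all $t$.

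Next I would use this invariant to eliminate one variable. Parametrizing $(a_\alpha, b_\alpha)$ on the hyperbola $b_\alpha^2 - a_\alpha^2 = \lambda$ (hyperbolic functions for $\lambda > 0$, trigonometric for $\lambda < 0$, linear for $\lambda = 0$) turns the pair of ODEs into a single autonomous scalar ODE for $\rho_\alpha(t) = a_\alpha b_\alpha$, of the schematic form $\dot\rho_\alpha = g_\lambda(\rho_\alpha)\,(\tilde\rho_\alpha - \rho_\alpha)$ where the positive prefactor $g_\lambda$ depends on $\lambda$ through the constraint; when $\lambda = 0$ one recovers $g_0(\rho) = 2\rho$ and hence the sigmoid of \eqref{eq:dynamics_saxe}. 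Solving by separation of variables gives $\rho_\alpha(t) = \rho_\alpha(0) + \gamma_\alpha(t;\lambda)(\tilde\rho_\alpha - \rho_\alpha(0))$ with $\gamma_\alpha$ an explicit monotone function increasing from $0$ to $1$; monotonicity follows because $\tilde\rho_\alpha - \rho_\alpha$ does not change sign along the trajectory (the fixed point $\tilde\rho_\alpha$ is attracting), and the boundary values are read off from the initial and equilibrium conditions. The detailed closed form is exactly the one in the appendix of \cite{domine2024lazy}, so I would quote it rather than rederive it.

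For the two limits, I would substitute the explicit $\gamma_\alpha(t;\lambda)$ and take $\lambda \to 0$ and $\lambda \to \pm\infty$ pointwise in $t$. As $\lambda \to 0$ the hyperbola degenerates to the lines $b_\alpha = \pm a_\alpha$, the prefactor $g_\lambda \to 2\rho_\alpha$, and the ODE becomes the logistic equation whose solution is the stated sigmoid $\frac{e^{2\tilde s_\alpha t/\tau}-1}{e^{2\tilde s_\alpha t/\tau}-1+\tilde s_\alpha/s_\alpha(0)}$. As $|\lambda| \to \infty$ the heavier layer is effectively frozen at scale $\sqrt{|\lambda|}$ (its relative rate of change is $O(1/|\lambda|)$), so the dynamics linearize: only the light combination evolves, $\dot\rho_\alpha \approx |\lambda|(\tilde\rho_\alpha - \rho_\alpha)$ after rescaling, giving the exponential $1 - e^{-|\lambda| t/\tau}$. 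The main obstacle I anticipate is not any single step but bookkeeping the $\lambda$-dependent reparametrization cleanly enough that the prefactor $g_\lambda$, the time constant $\tau$, and the normalization of $\tilde s_\alpha$ versus $\tilde\rho_\alpha$ all match across the $\lambda = 0$, $\lambda > 0$, $\lambda < 0$ cases and reproduce exactly the two limiting formulas; handling the sign of $\lambda$ (hyperbolic vs.\ trigonometric branch, and the possibility that a trajectory with $\lambda < 0$ could in principle hit $a_\alpha = 0$) requires care, and this is where I would lean most heavily on the explicit computations in \cite{domine2024lazy}.
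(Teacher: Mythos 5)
The paper does not actually prove this theorem: it states the result and defers entirely to the appendix of \cite{domine2024lazy} ("The proof for Theorem 6.1 can be found in..."), so the only in-paper precedent is the $\lambda=0$ derivation of \cref{eq:dynamics_saxe} in \cref{app:simple_math}. Your sketch is the correct and natural generalization of that derivation, and it matches the route taken in the cited reference: diagonalize under whitening and task alignment, use the conserved balance $b_\alpha^2-a_\alpha^2=\lambda$ to collapse each mode to a scalar ODE, separate variables, and take the two limits. The reduction works uniformly in the sign of $\lambda$ because $(a_\alpha^2+b_\alpha^2)^2=(b_\alpha^2-a_\alpha^2)^2+4a_\alpha^2b_\alpha^2$ gives $\dot\rho_\alpha=\sqrt{4\rho_\alpha^2+\lambda^2}\,(\tilde\rho_\alpha-\rho_\alpha)$, which degenerates to $2\rho_\alpha(\tilde\rho_\alpha-\rho_\alpha)$ (logistic/sigmoid) as $\lambda\to 0$ and to $|\lambda|(\tilde\rho_\alpha-\rho_\alpha)$ (exponential) as $|\lambda|\to\infty$, exactly as you say.

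One small correction: for $\lambda<0$ the level set $b_\alpha^2-a_\alpha^2=\lambda$ is still a hyperbola (just with the branches swapped), so the parametrization remains hyperbolic, not trigonometric; a circle would arise only from conserving $a_\alpha^2+b_\alpha^2$, which is not the conserved quantity here. This does not affect your argument, since the prefactor $\sqrt{4\rho_\alpha^2+\lambda^2}$ is even in $\lambda$ and bounded below by $|\lambda|>0$, which also disposes of your worry about trajectories hitting $a_\alpha=0$: the ODE for $\rho_\alpha$ never loses its attracting fixed point at $\tilde\rho_\alpha$. With that fixed, your proposal is a faithful account of the proof the paper outsources.
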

The proof for Theorem \ref{thm:singular-values} can be found in \cite{domine2024lazy}.
As $\lambda$ increases, the dynamics of the network resembles those of a linear model, transitioning into the \emph{lazy} regime. On the other end, as lambda goes to zero the networks learns the most salient features first, which can be beneficial for generalization \citep{lampinen2018analytic}. 

\subsection{Controlling the dynamics with layer imbalance}\label{subsec:app:imba_setup}

We conducted an experiment to explore the relationship between relative weight scale, absolute weight scale, and the network's learning regime in a general setting as shown in \cref{fig:dynamics_lazy_rich}. The absolute scale of the weights in \cref{fig:dynamics_lazy_rich}(a) is defined as the norm of \(W_2 W_1\). Random initial weights with specified relative and absolute scales were generated, and the network was trained on a random input-output task. During training, we calculated the logarithmic kernel distance of the NTK from initialization and the logarithmic loss.
The kernel distance is calculated as: 
$S(t) = 1 - \frac{\langle K_{0}, K_{t} \rangle}{\|K_{0}\|_F \|K_{t}\|_F}.$
as defined in Fort et al.~\citeyearpar{fort2020deep}.
These values were visualized as heat maps for \(\lambda \in [-9, 9]\) and relative scales in \((0, 20]\).
The regression task parameters were set with $(\sigma = \sqrt{3})$. The task has batch size \(N = 10\). The network has with a learning rate of \(\eta = 0.01\).  The lambda-balanced network are initialized with $\mathbf{E}[xy^T]= I$ of a random regression task. 
\cref{fig:dynamics_lazy_rich}(a) shows that a square ($d=c$) linear neural network satisfying \cref{eq:layer_imbalance} only show amplifying dynamics when the weights are balanced with small $|\lambda|$. Larger imbalances lead to linear dynamics in the training of a single layer. However, as the relationship becomes more complex.\cite{domine2024lazy} demonstrates that both relative and absolute scales significantly impact the learning regime of the network across different architectural types. 

Note that here, we examine how much the NTK has shifted by the end of training, without focusing on the dynamics of the NTK throughout the training process. As discribed above, lazy training is associated with exponential dynamics, while rich training is linked to sigmoidal dynamics. As highlighted in Braun et al.~\citeyearpar{braun2022exact}, the dynamical transition in the structure of internal representations can be decoupled. This result demonstrates that rich, task-specific representations can emerge at any scale when \(\lambda = 0\), with the dynamics transitioning from sigmoidal to exponential as the scale increases.

\subsection{Controlling grokking with layer imbalance}

As discussed above, larger weight imbalance (in any direction) leads the transition from the rich to the lazy regime in square linear neural networks (\cref{fig:dynamics_lazy_rich}). For neural networks with non-linear activations, the non-linear activations break the symmetry between layers, potentially initiating feature learning for $\lambda \rightarrow \infty$ \cite{kunin2024get}. Here, we explore how the \textbf{architecture} can break the symmetry and lead to grokking — a transition from the lazy to rich regime — in linear neural networks, followed by its application to networks with non-linear activations.

\paragraph{Grokking in linear neural networks} We introduce two networks:
\textbf{funnel networks} that narrow from input to output, and \textbf{anti-funnel networks} that expand from input to output. In \cref{fig:dynamics_lazy_rich_app}, we observe that funnel networks enters the lazy regime as \(\lambda \to -\infty\) (downstream initialization), while anti-funnel networks do so as \(\lambda \to \infty\) (upstream initialization), which are consistent with \cref{fig:dynamics_lazy_rich}. However, the networks in opposite limits (\(\lambda \rightarrow \infty\) for \cref{fig:dynamics_lazy_rich_app}(a) and \(\lambda \rightarrow -\infty\) for \cref{fig:dynamics_lazy_rich_app}(b)) show significant change in the NTK after training (rich regime). The networks were initially at the lazy regime, but showed a transition to a delayed rich regime (grokking).

Intuitively, the gradient from lighter layer negligibly changes the heavier layer (lazy), but the wider input (output) of funnel (anti-funnel) networks allows the lighter layer to aggregate sufficient gradients on the heavier layer, driving a meaningful change into a rich regime.

\begin{figure}[ht]
    \centering
    \subfloat[Layer Imbalance in Funnel Network]{
    \includegraphics[width=0.33\columnwidth]{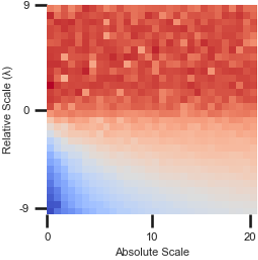}%
    }
    \subfloat[Layer Imbalance in Anti-Funnel Network]{
    \includegraphics[width=0.33\columnwidth]{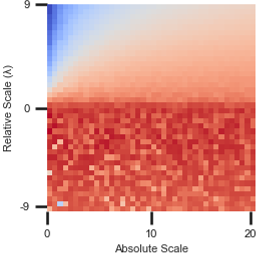}%
    }
    \vspace{-0.2cm}
\caption{\textbf{The impact of the architecture.} 
Linear neural networks that satisfy \cref{eq:layer_imbalance} are trained on randomized regression. Figures show the changes in NTK as a function of weight-to-target ratio and layer imbalances for \textbf{(a)}: funnel networks and \textbf{(b)}: in anti-funnel networks.}\label{fig:dynamics_lazy_rich_app}
\end{figure}

\paragraph{Grokking in non-linear neural networks} 
The study of linear networks does not always directly translate into similar phenomenology in non-linear networks, but these networks successfully highlight the relevant parameters that control different learning regimes: the relative scale. Notably, networks initialized with upstream scaling can exhibit a delayed entry into the rich learning phase. For example, Kunin et al. \citeyearpar{kunin2024get} demonstrated that decreasing the variance of token embeddings in a single-layer transformer (accounting for less than 6\% of all parameters) significantly reduces the time required to achieve grokking. The method can be viewed as targeted training for the earliest layer that undergoes the most non-linear activations.

\subsection{Additional experimental details for Figure. 6}
\paragraph{Code} The codes used to plot \cref{fig:dynamics_lazy_rich} are based on the original codes of Domine et al.~\citeyearpar{domine2024lazy} with only minor changes. They are available at \url{https://github.com/yoonsoonam119/linear_first}.



\paragraph{Figure. 6(b)} The setup is identical to that of \cref{fig:dynamics_lazy_rich}(a) except a few changes. We used the linear neural network with $d=20,p=20,c=2$. We randomly initialized $W_2,W_1$ (which do not satisfying the balanced condition) using symmetrized initialization \cite{chizat2019lazy} to set the initial function equal to the zero function. The layer imbalance was implemented using the standard deviations $\sigma_1, \sigma_2$ used to initialize the weights. To change the weight-to-target ratio, we used target downscaling in the range $[0.1, 50]$. 
 

\section{Grokking}\label{app:grokking}

In this section, we present a brief history and related works on grokking.

Power et al. (\citeyear{power2022grokking}) first identified grokking, a phenomenon characterized by delayed generalization in algorithmic tasks.
Extending beyond algorithmic datasets, Liu et al. (\citeyear{liu2022omnigrok}) showed that grokking can also occur in general machine learning tasks, including image classification, sentiment analysis, and molecular property prediction. Humayun et al. (\citeyear{humayun2024deep}) further demonstrated that grokking is a more widespread phenomenon by showing its presence in adversarial robustness for image classification tasks.
Wang et al. (\citeyear{wang2024grokking}) discovered that transformers can grok in implicit reasoning tasks, while Zhu et al. (\citeyear{zhu2024critical}) studied the impact of dataset size on grokking in text classification tasks.

\subsection{Approaches to understand grokking}
Liu et al. (\citeyear{liu2022omnigrok}) claimed that large weight initialization induces grokking.
Thilak et al. (\citeyear{thilak2022slingshot}) proposed the slingshot mechanism to explain grokking, attributing it to anomalies in the adaptive optimization process, particularly in the absence of weight decay. Zheng et al. (\citeyear{zheng2024delays}) argued that sudden generalization is closely related to changes in representational geometry.
Levi et al. (\citeyear{levi2024grokking}) addressed grokking in the lazy regime with Gaussian inputs by analyzing gradient-flow training dynamics. Based on this analysis, they demonstrate that grokking time (gap between overfitting ang generalization) depends on weight initialization, input and output dimensions, training sample size, and regularization. However, due to strong assumptions, their analysis is limited to the lazy regime, resulting in an insufficient explanation of other aspects of grokking such as the lazy-to-rich transition.

Several approaches have been proposed to interpret grokking in the modular arithmetic task, where grokking was originally reported. Nanda et al. (\citeyear{nanda2023progress}) and Zhong et al. (\citeyear{zhong2024clock}) used mechanical interpretability to examine case studies of grokking in modular addition. Gromov (\citeyear{gromov2023grokking}) provided a solution for grokking in a two-layer MLP on modular addition.
He et al. (\citeyear{he2024learning}) analyzed grokking in linear modular functions.
Mohamadi et al. (\citeyear{mohamadi2024why}) demonstrated that a two-layer network can generalize with fewer training data on the modular addition task. Rubin et al.\citeyearpar{rubin2024grokking} showed that grokking is a first-order phase transition in both teacher-student setup and modular arithmetic problem.

\subsection{Acceleration of grokking}

Several methods have been proposed to accelerate grokking and escape the lazy regime. Liu et al. (\citeyear{liu2022towards}) argued that grokking can be mitigated through appropriate hyperparameter tuning, such as adjusting the learning rate or weight decay.
Gromov (\citeyear{gromov2023grokking}) also suggested that regularization techniques, including weight decay and the momentum of the optimizer, can reduce the training time required for grokking.

Kumar et al. (\citeyear{kumar2024grokking}) asserted that \textbf{grokking occurs when the network is initialized in the lazy regime, particularly when it has large width, large initialization, small label scaling, or large output scaling.}  
Lee et al. (\citeyear{lee2024grokfast}) proposed amplifying the gradients of low-frequency components to accelerate the speed of generalization.
Furuta et al. (\citeyear{furuta2024towards}) suggested pre-training networks on similar tasks that share transferable features, which can then be applied to downstream target tasks to speed up grokking.



\subsection{Empirical confirmation on MLP}\label{subsec:app:mlpsetup}

\paragraph{Code} The code for \cref{fig:removing_grokking_app2} is available at \url{https://github.com/yoonsoonam119/linear_first}.
 
To confirm that our analysis of the scalar input neural network extends to neural networks, we trained depth 4, width 512 MLPs with tanh activation on 1000 datapoints of flattened (vectorized) MNIST dataset. We used MSE loss with the class label as one-hot vector multiplied by the target scale. By default, the weights of the layers were multiplied by factor of 5 compared to LeCun initialization. For training, Adam with learning ratio of $10^{-3}$ and weight decay of $10^{-4}$ was used. Batch size was 128 and the model was trained for 2000 epochs. For input downscaling, the MNIST data vectors were multiplied by the input scaling factor. The output scale was multiplied to the output of the function (MLP). We summarize the different configurations used in \cref{fig:removing_grokking_app2}:

\begin{tabular}{ c c c c c c  c }
Name & Sub figure & Weight init ratio & Target scale & Input scale & Output scale   \\
\hline\hline 
Default & (a) & 5 & 3 & 1 & 1  \\
Weight downscaling & (b) & \textbf{1}& 3 & 1 & 1  \\
Target upscaling & (c) & 5 & \textbf{30} & 1 & 1  \\
Input downscaling & (d) & 5 & 3 & \textbf{0.01} & 1  \\
Output downscaling & (e) & 5 & 3 & 1 & \textbf{0.1}   \\
 
\end{tabular}

\section{Explicit solution for the 2-layer linear neural network with scalar input and scalar output}\label{app:lee}

We provide a general solution for the $2$-layer layerwise model described in \cref{eq:one_linear_nn}. To repeat the setting, our model and target functions are single variable functions  
\begin{equation}
f(x) = \frac{x}{Z}\left(  \sum_{i=1}^{p} a_i b_i  \right) , \quad f^*(x) = xS,
\end{equation}
where we can set our loss function as
\begin{equation} \label{eq:one_linear_nn_loss}
    \mathcal{L} = \frac{1}{2} \left( \frac{1}{Z} \sum_{i=1}^{p} a_i b_i - S \right)^2.
\end{equation}

\begin{theorem}\label{thm:gamma}
The solution for the gradient descent dynamics for the loss function (\cref{eq:one_linear_nn_loss}) is given as
\begin{align} \label{eq:one_linear_nn_sol_ab}
a_i(t) &= \frac{a_i(0) + b_i(0)}{2} \gamma(t)^{1/2} + \frac{a_i(0) - b_i(0)}{2} \gamma(t)^{-1/2}, \\
\label{eq:one_linear_nn_sol_ab_2} b_i(t) &= \frac{a_i(0) + b_i(0)}{2} \gamma(t)^{1/2} - \frac{a_i(0) - b_i(0)}{2} \gamma(t)^{-1/2},
\end{align}
where
\begin{equation}
    \gamma(t) = \frac{\Sigma_0 - \mathcal{S}_0 +S + \sqrt{\Sigma_0^2 - \mathcal{S}_0^2 + S^2}+\left(-\Sigma_0 + \mathcal{S}_0 -S + \sqrt{\Sigma_0^2 - \mathcal{S}_0^2 + S^2}\right)\exp \left(-\frac{4 \sqrt{\Sigma_0^2 - \mathcal{S}_0^2 + S^2}}{Z} t \right) }{
    \Sigma_0 + \mathcal{S}_0 -S + \sqrt{\Sigma_0^2 - \mathcal{S}_0^2 + S^2}+\left(-\Sigma_0 -\mathcal{S}_0 +S + \sqrt{\Sigma_0^2 - \mathcal{S}_0^2 + S^2}\right)\exp \left(-\frac{4 \sqrt{\Sigma_0^2 - \mathcal{S}_0^2 + S^2}}{Z} t \right)}
\end{equation}
and
\begin{equation}
\mathcal{S}_0 = \frac{1}{Z} \sum_{j=1}^{p} a_j(0) b_j(0), \quad \Sigma_0 = \frac{1}{Z} \sum_{j=1}^{p} \frac{a_j(0)^2+b_j(0)^2}{2}.\end{equation}
In particular, we have
\begin{equation}
    \gamma_+ := \lim_{t \rightarrow \infty} \gamma(t) = \gamma(\infty) = \frac{S + \sqrt{\Sigma_0^2 - \mathcal{S}_0^2 + S^2}}{\Sigma_0 + \mathcal{S}_0}.
\end{equation}
The function $\gamma$ is always monotone from $\gamma(0)=1$ to $\gamma(\infty)$: if $S> \mathcal{S}_0$ then $\gamma$ is increasing, and if $S<\mathcal{S}_0$ then $\gamma$ is decreasing.
\end{theorem}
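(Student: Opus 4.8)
The plan is to reduce the coupled system of $2p$ ODEs to a single scalar ODE by exploiting the conserved quantities and the fact that the loss depends on the weights only through the scalar combination $\theta = Z^{-1}\sum_i a_i b_i$. First I would write the gradient flow equations $\dot a_i = -Z^{-1} b_i(\theta - S)$ and $\dot b_i = -Z^{-1} a_i(\theta - S)$, and observe the per-neuron conserved quantity $a_i(t)^2 - b_i(t)^2 = a_i(0)^2 - b_i(0)^2$ (the scalar case of \cref{eq:conserved}). Then I would change variables to $u_i = a_i + b_i$ and $v_i = a_i - b_i$; the dynamics become $\dot u_i = -Z^{-1}(\theta-S)u_i$ and $\dot v_i = +Z^{-1}(\theta-S)v_i$, so that each $u_i$ shares a common multiplicative factor and each $v_i$ shares the reciprocal factor. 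Concretely, defining $\gamma(t)$ by $\gamma(0)=1$ and $\dot\gamma/\gamma = -2Z^{-1}(\theta-S)$, one gets $u_i(t) = u_i(0)\gamma(t)^{1/2}$ and $v_i(t) = v_i(0)\gamma(t)^{-1/2}$, which upon unpacking $a_i = (u_i+v_i)/2$, $b_i=(u_i-v_i)/2$ gives exactly \cref{eq:one_linear_nn_sol_ab} and \cref{eq:one_linear_nn_sol_ab_2}. This step is essentially bookkeeping once the right variables are chosen.

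Next I would derive the closed ODE for $\gamma$. Summing, $\sum_i a_i(t)b_i(t) = \tfrac14\sum_i(u_i(0)^2\gamma - v_i(0)^2\gamma^{-1})$, and using $\sum_i u_i(0)^2/(4Z) = (\Sigma_0+\mathcal{S}_0)/2$ and $\sum_i v_i(0)^2/(4Z) = (\Sigma_0-\mathcal{S}_0)/2$, we obtain
\begin{equation}
\theta(t) = \tfrac12(\Sigma_0+\mathcal{S}_0)\gamma(t) - \tfrac12(\Sigma_0-\mathcal{S}_0)\gamma(t)^{-1}.
\end{equation}
Substituting into $\dot\gamma = -2Z^{-1}(\theta-S)\gamma$ yields a Riccati-type equation,
\begin{equation}
Z\dot\gamma = -(\Sigma_0+\mathcal{S}_0)\gamma^2 + 2S\gamma + (\Sigma_0-\mathcal{S}_0),
\end{equation}
an autonomous scalar ODE with a quadratic right-hand side. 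The key observation is that its discriminant is $4S^2 + 4(\Sigma_0+\mathcal{S}_0)(\Sigma_0-\mathcal{S}_0) = 4(S^2 + \Sigma_0^2 - \mathcal{S}_0^2)$, so the two fixed points are $\gamma_\pm = \big(S \pm \sqrt{S^2+\Sigma_0^2-\mathcal{S}_0^2}\big)/(\Sigma_0+\mathcal{S}_0)$, matching the claimed $\gamma_+$.

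Finally I would integrate the Riccati equation by partial fractions: $\frac{Z\,d\gamma}{(\Sigma_0+\mathcal{S}_0)(\gamma-\gamma_+)(\gamma_+-\gamma_-)^{-1}\cdots}$, more cleanly written as $\frac{d\gamma}{(\gamma-\gamma_+)(\gamma-\gamma_-)} = -\frac{(\Sigma_0+\mathcal{S}_0)}{Z}dt$, which integrates to $\log\frac{\gamma-\gamma_+}{\gamma-\gamma_-} = -\frac{(\Sigma_0+\mathcal{S}_0)(\gamma_+-\gamma_-)}{Z}t + \text{const}$, i.e. $-\frac{2\sqrt{S^2+\Sigma_0^2-\mathcal{S}_0^2}}{Z}\cdot\frac{(\Sigma_0+\mathcal{S}_0)}{?}$—here I must track the constant carefully; imposing $\gamma(0)=1$ fixes it, and solving the resulting linear relation for $\gamma(t)$ produces the stated rational expression with exponential rate $4\sqrt{\Sigma_0^2-\mathcal{S}_0^2+S^2}/Z$ (the factor $4$ rather than $2$ appearing because $\gamma_+-\gamma_- = 2\sqrt{\cdots}/(\Sigma_0+\mathcal{S}_0)$ multiplies the $(\Sigma_0+\mathcal{S}_0)/Z$ rate). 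Monotonicity then follows from the sign of $Z\dot\gamma$ at $\gamma=1$, which equals $-(\Sigma_0+\mathcal{S}_0)+2S+(\Sigma_0-\mathcal{S}_0) = 2(S-\mathcal{S}_0)$, giving increasing $\gamma$ iff $S>\mathcal{S}_0$ and decreasing iff $S<\mathcal{S}_0$; since $\gamma$ stays between $1$ and the attracting fixed point with no other fixed point in between, it is monotone throughout, and $\gamma(\infty)=\gamma_+$.

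The main obstacle I anticipate is purely computational rather than conceptual: correctly bookkeeping the signs, the normalization by $Z$, and the integration constant so that the final closed form matches the stated formula exactly (including verifying that the relevant fixed point the flow converges to is $\gamma_+$ and not $\gamma_-$, and that the numerator/denominator are assigned correctly for both signs of $S-\mathcal{S}_0$). One should also check the degenerate case $\Sigma_0^2 - \mathcal{S}_0^2 + S^2 = 0$ (repeated root), though under the standing assumption that initialization is nontrivial one expects $\Sigma_0 > |\mathcal{S}_0|$ strictly, so $S^2 + \Sigma_0^2 - \mathcal{S}_0^2 > 0$ and the two roots are distinct.
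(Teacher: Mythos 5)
Your proposal follows the paper's proof essentially step for step: pass to the sum/difference variables $a_i\pm b_i$, observe they evolve by reciprocal multiplicative factors, introduce the integrating factor $\gamma$ with $\dot\gamma/\gamma=-2Z^{-1}(\theta-S)$, reduce to an autonomous quadratic (Riccati) ODE for $\gamma$, and integrate by partial fractions; the monotonicity argument via the sign of $\dot\gamma$ at $\gamma(0)=1$ and the fixed-point structure is also the paper's.

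One concrete point of disagreement deserves attention: your Riccati equation $Z\dot\gamma=-(\Sigma_0+\mathcal{S}_0)\gamma^2+2S\gamma+(\Sigma_0-\mathcal{S}_0)$ integrates to an exponential rate $(\Sigma_0+\mathcal{S}_0)(\gamma_+-\gamma_-)/Z=2\sqrt{\Sigma_0^2-\mathcal{S}_0^2+S^2}/Z$, not the $4\sqrt{\cdots}/Z$ appearing in the theorem, and your parenthetical justification of the factor $4$ does not actually produce a $4$ — it reproduces the $2$. The paper's proof gets $4$ because it writes $\dot\gamma=-2Z^{-1}\bigl((\Sigma_0+\mathcal{S}_0)\gamma^2-2S\gamma-(\Sigma_0-\mathcal{S}_0)\bigr)$, i.e.\ with twice the coefficient of your equation; expanding $\gamma\cdot\frac{\gamma\pm\gamma^{-1}}{2}$ carefully shows a $\tfrac12$ that must be tracked, so the two cannot both be right. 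You can settle the constant by a sanity check such as $p=1$, $Z=1$, $a(0)=b(0)=\epsilon$, $S=1$, where the dynamics must reduce to the sigmoid of \cref{eq:dynamics} with rate $e^{-2St}$; this check favors your coefficient, so do not massage your (correct) computation to match the stated exponent — state the rate your Riccati equation actually gives.
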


\begin{proof}
Since our gradient descent dynamics is given as
\begin{align}
    \frac{d a_i}{dt} = -\frac{\partial \mathcal{L}}{\partial a_i} = - b_i \left( Z^{-2} \sum_{j=1}^{m}a_j b_j - Z^{-1} S \right), \quad  
    \frac{d b_i}{dt} = -\frac{\partial \mathcal{L}}{\partial b_i} = - a_i \left( Z^{-2} \sum_{j=1}^{m} a_j b_j - Z^{-1} S \right),
\end{align}
one can observe
\begin{align}
    \frac{d}{dt} (a_i + b_i) = -(a_i + b_i)\left( Z^{-2} \sum_{j=1}^{p}a_j b_j - Z^{-1}S \right), \quad  
    \frac{d}{dt} (a_i - b_i) = (a_i - b_i) \left( Z^{-2} \sum_{j=1}^{p}a_j b_j - Z^{-1}S \right).
\end{align}

So if we define $\gamma$ as 
\begin{equation}
    \gamma(t) = \exp \left( -2 \int_{s=0}^{t} 
    \left( Z^{-2} \sum_{j=1}^{p}a_j(s) b_j(s) - Z^{-1} S \right) ds \right),
\end{equation}
then we have 
\begin{equation}
    \frac{d}{dt} \left( \gamma \right) = -2\gamma \left(   
     Z^{-2} \sum_{j=1}^{p}a_jb_j - Z^{-1} S \right).
\end{equation}
Hence 
\begin{align}
    \frac{d}{dt} \left( \gamma^{-1/2} (a_i + b_i) \right) &= \gamma^{-1/2}    \left( Z^{-2} \sum_{j=1}^{p}a_j b_j - Z^{-1} S \right) (a_i + b_i) - \gamma^{1/2}\left( Z^{-2} \sum_{j=1}^{p}a_j b_j - Z^{-1} S \right) (a_i + b_i) = 0, \\
    \frac{d}{dt} \left( \gamma^{1/2} (a_i - b_i) \right) &= -\gamma^{1/2}    \left( Z^{-2} \sum_{j=1}^{p}a_j b_j - Z^{-1} S \right) (a_i - b_i) + \gamma^{1/2}\left( Z^{-2} \sum_{j=1}^{p}a_j b_j - Z^{-1} S \right) (a_i - b_i) = 0.
\end{align}
From those and $\gamma(0)=0$ we have
\begin{equation}
    a_i(t) + b_i(t) = (\gamma(t))^{1/2} (a_i(0) + b_i(0)), \quad a_i(t) - b_i(t) = (\gamma(t))^{-1/2} (a_i(0)-b_i(0) ),
\end{equation}
which leads to \cref{eq:one_linear_nn_sol_ab} and \cref{eq:one_linear_nn_sol_ab_2}. We next work on dynamics of $\gamma$. We start from
\begin{equation}
    \frac{1}{\gamma} \frac{d \gamma}{dt} = \frac{d}{dt}(\log \gamma) = -2 Z^{-2} \sum_{j=1}^{m} a_j b_j + 2Z^{-1} S.
\end{equation}
We can express
\begin{align}
    Z^{-2} \sum_{j=1}^{p} a_j b_j &=  Z^{-2} \sum_{j=1}^{p} \left( \frac{(a_j+b_j)^2}{4}  - \frac{(a_j - b_j)^2}{4} \right)\\ &= Z^{-2} \sum_{j=1}^{p} \frac{(a_j(0)+b_j(0))^2}{4} \gamma - Z^{-2} \sum_{j=1}^{p} \frac{(a_j(0)+b_j(0))^2}{4} \gamma^{-1} \\
    &= Z^{-2} \sum_{j=1}^{p} a_j(0) b_j(0) \frac{\gamma + \gamma^{-1}}{2} + Z^{-2} \sum_{j=1}^{p} \frac{a_j(0)^2+b_j(0)^2}{2} \frac{\gamma - \gamma^{-1}}{2}.
\end{align}
So if we denote
\begin{equation}
\mathcal{S}_0 = \frac{1}{Z} \sum_{j=1}^{p} a_j(0) b_j(0), \quad \Sigma_0 = \frac{1}{Z} \sum_{j=1}^{p} \frac{a_j(0)^2+b_j(0)^2}{2},
\end{equation}
then our dynamics for $\gamma$ becomes
\begin{equation}
\frac{d \gamma}{dt} = -2Z^{-1} \gamma \left(
\mathcal{S}_0 \frac{\gamma-\gamma^{-1}}{2} + \Sigma_0 \frac{\gamma + \gamma^{-1}}{2}- S\right) = -2 Z^{-1}\left( (\Sigma_0 + \mathcal{S}_0)\gamma^2 - 2 S \gamma - (\Sigma_0 - \mathcal{S}_0) \right).
\end{equation}
One can observe
\begin{equation}
\Sigma_0 = \frac{1}{Z} \sum_{j=1}^{p} \frac{a_j(0)^2+b_j(0)^2}{2} \ge \frac{1}{Z} \sum_{j=1}^{p} |a_j(0) b_j(0)| \ge |\mathcal{S}_0|,
\end{equation}
so we can always factor the quadratic in $\gamma$ as
\begin{equation}
(\Sigma_0 + \mathcal{S}_0)\gamma^2 - 2 S \gamma - (\Sigma_0 - \mathcal{S}_0) = (\Sigma_0 + \mathcal{S}_0) (\gamma - \gamma_{-})(\gamma- \gamma_{+})
\end{equation}
where
\begin{equation}
    (\gamma_{-}, \gamma_{+}) = \left( \frac{S - \sqrt{ \Sigma_0^2 -  \mathcal{S}_0^2 + S^2}}{ \Sigma_0 + \mathcal{S}_0}, \frac{S + \sqrt{ \Sigma_0^2 -  \mathcal{S}_0^2 + S^2}}{\Sigma_0 + \mathcal{S}_0} \right) \quad \Rightarrow \quad \gamma_{-}<0<\gamma_{+}.
\end{equation}
We continue using partial fraction decomposition as
\begin{equation}
\frac{1}{(\gamma- \gamma_{-}) (\gamma - \gamma_{+})} \frac{d \gamma}{dt} = \frac{1}{\gamma_{+} - \gamma_{-}}\left( \frac{1}{\gamma - \gamma_{+}}-\frac{1}{\gamma-\gamma_{-}} \right) \frac{d \gamma}{dt} = -2 Z^{-1} (\Sigma_0 + \mathcal{S}_0)
\end{equation}
and
\begin{equation}
\frac{d}{dt} \log \left| \frac{\gamma - \gamma_{+}}{\gamma - \gamma_{-}}  \right| = -2 Z^{-1} (\Sigma_0 + \mathcal{S}_0)(\gamma_{+} - \gamma_{-} ) = - 4Z^{-1} \sqrt{\Sigma_0^2 - \mathcal{S}_0^2 + S^2}.
\end{equation}
Both $\gamma - \gamma_{+}$ and $\gamma - \gamma_{-}$ do not change signs, so the following holds regardless of the sign of $\gamma- \gamma_{+}$:
\begin{equation}\label{eq:eqn75}
\frac{\gamma - \gamma_{+}}{\gamma- \gamma_{-}} = \frac{\gamma(0) - \gamma_{+}}{\gamma(0)-\gamma_{-}} \exp \left(-\frac{4 \sqrt{\Sigma_0^2 - \mathcal{S}_0^2 + S^2}}{Z} t \right).
\end{equation}
As $\gamma(0)=1$, this is solved as
\begin{equation}
    \gamma(t) = \frac{\Sigma_0 - \mathcal{S}_0 +S + \sqrt{\Sigma_0^2 - \mathcal{S}_0^2 + S^2}+\left(-\Sigma_0 + \mathcal{S}_0 -S + \sqrt{\Sigma_0^2 - \mathcal{S}_0^2 + S^2}\right)\exp \left(-\frac{4 \sqrt{\Sigma_0^2 - \mathcal{S}_0^2 + S^2}}{Z} t \right) }{
    \Sigma_0 + \mathcal{S}_0 -S + \sqrt{\Sigma_0^2 - \mathcal{S}_0^2 + S^2}+\left(-\Sigma_0 -\mathcal{S}_0 +S + \sqrt{\Sigma_0^2 - \mathcal{S}_0^2 + S^2}\right)\exp \left(-\frac{4 \sqrt{\Sigma_0^2 - \mathcal{S}_0^2 + S^2}}{Z} t \right)}.
\end{equation}
The limit of $t \rightarrow \infty$ in \cref{eq:eqn75} gives
\begin{equation}
    \gamma(t) \xrightarrow{t \rightarrow \infty} \gamma_{+} = \frac{S + \sqrt{ \Sigma_0^2 -  \mathcal{S}_0^2 + S^2}}{\Sigma_0 + \mathcal{S}_0}.
\end{equation}
From \cref{eq:eqn75}, we can also see $\gamma \in (\gamma_{-}, \gamma_{+})$, and $\gamma$ increases if and only if $1 - \gamma_{+}<0$. It is straightforward to check that if $S>\mathcal{S}_0$ (or $S<\mathcal{S}_0$) then $\gamma_{+}>1$ (and $\gamma_{+}<1$ respectively).
\end{proof}

\begin{remark}
If $S>\mathcal{S}_0$, then the graph of $\gamma(t)$ becomes part of some sigmoid function whose image is $(\gamma_{-}, \gamma_{+})$ and center is at height $\frac{\gamma_{-} + \gamma_{+}}{2} = \frac{S}{\Delta_0+\Sigma_0}$. As the graph of $\gamma$ starts from height $1 = 
\gamma(0)$, one can show that $\gamma(t)$ is concave for all $t \ge 0$ if and only if $S<\Delta_0+\Sigma_0$, and otherwise it contains an inflection point. If $S< \mathcal{S}_0$, the graph takes a shape of (dilation and translation of) $(1-e^{-t})^{-1}$, so after the point $\gamma(0)=1$ it is convex and monotone decreasing.    
\end{remark}

\begin{remark}\label{remark:Sigma0S}
Since $a_i(0)^2+b_i(0)^2 \ge 2a_i(0)b_i(0)$, we have $\Sigma_0 \ge 0$ and $\Sigma_0 \ge \mathcal{S}_0$. Therefore, 
\begin{align*}
\gamma_+ = \gamma(\infty) &= \frac{\mathcal{S} + \sqrt{\Sigma_0^2 - \mathcal{S}_0^2 + S^2}}{\Sigma_0 + \mathcal{S}_0}  \ge \frac{2\mathcal{S}}{2\Sigma_0} = \frac{\mathcal{S}}{\Sigma_0}=\left( \frac{\Sigma_0}{\mathcal{S}} \right)^{-1},
\end{align*}
Hence, if $\Sigma_0/\mathcal{S}$ is small, we have large $\gamma_+$ (rich regime). 
\end{remark}

\section{Research directions}\label{app:research_direction}
We share additional research directions for layerwise linear models. 

\subsection{Connection between generalization and greedy dynamics}
Greedy dynamics or the rich regime often correlate with generalization \cite{lampinen2018analytic, xu2024does}. Generalizing models in vision tasks exhibit NC~\cite{papyan2020prevalence}. Models that grok achieves a generalizing solution when they are in the rich regime \cite{kumar2024grokking}.

However, greedy dynamics alone cannot fully predict generalization. ResNet18 can show NC on CIFAR10 dataset with partially and fully randomized labels \cite{zhu2021geometric}. Transition into a rich regime in grokking may degrade the performance \cite{lyu2024dichotomy}. The imperfect correlation between greedy dynamics and generalization highlights the need for a deeper understanding of what these dynamics optimize.

\subsection{Encouraging greedy dynamics}
One of the goals of understanding grokking is to uncover techniques for transitioning models into generalizing ones. In this paper and previous studies, we have gained insight into the rich regime and ways to enhance it. In fact, extreme rich regimes have shown improved performance on certain tasks \cite{atanasov2024optimization}. Can this knowledge be extended to promote rich regimes in various scenarios, such as during training?

\end{document}